\newtheorem{lemma}{Lemma}
\newtheorem{remark}{Remark}
\newtheorem{theorem}{Theorem}
\newtheorem{assumption}{Assumption}
\newtheorem{example}{Example}
\newtheorem{case}{Case}
\title{Enhancing In-Context Learning Performance with just SVD-Based Weight Pruning: A Theoretical Perspective}
\author{%
  Xinhao Yao\textsuperscript{1}, Xiaolin Hu\textsuperscript{1}, Shenzhi Yang\textsuperscript{1}, Yong Liu\textsuperscript{1}\thanks{Corresponding author.} \\
  \textsuperscript{1}Renmin University of China, Beijing, China\\
\texttt{\{yaoxinhao021978, xiaolinhu, yangshenzhi2003, liuyonggsai\}@ruc.edu.cn}\\
}
\begin{document}

\maketitle

\begin{abstract}
Pre-trained large language models (LLMs) based on Transformer have demonstrated striking in-context learning (ICL) abilities. With a few demonstration input-label pairs, they can predict the label for an unseen input without any parameter updates. In this paper, we show an exciting  phenomenon that SVD-based weight pruning can enhance ICL performance, and more surprising, pruning weights in deep layers often results in more stable performance improvements than in shallow layers. However, the underlying mechanism of those findings still remains an open question. To reveal those findings, we conduct an in-depth theoretical analysis by presenting the implicit gradient descent (GD) trajectories of ICL and giving the mutual information based generalization bounds of ICL via full implicit GD trajectories. This helps us reasonably explain the surprising experimental findings.  Besides, based on all our experimental and theoretical insights, we intuitively propose a simple, model-compression and derivative-free algorithm for downstream tasks in enhancing ICL inference. Experiments on benchmark datasets and open source LLMs display the method effectiveness\footnote{The code is available at \url{https://github.com/chen123CtrlS/EnhancingICL_SVDPruning}.}. 
\end{abstract}

\section{Introduction}
Recently,  large language models (LLMs) based on the Transformer architecture \citep{vaswani2017attention} have emerged striking in-context learning (ICL) capabilities:  Given a prompt containing demonstration sample and a test data, the model can make a prediction
for the test data and achieve excellent performance without any parameter updates \citep{brown2020language,lieber2021jurassic,rae2021scaling,black2022gptneox}. This leads considerable works that aim to shed light on it \citep{xie2021explanation,irie2022dual,zhang2023incontext,akyürek2023learning,dai2023gpt,garg2023transformers,ahn2023transformers,vonoswald2023transformers,ren2023incontext} .

In this paper, we show our surprising findings in ICL inference by experimentally analyzing the effect of singular value decomposition (SVD)-based pruning on the model performance at different depth layers. As demonstrated in Figure \ref{figure1}: (i) SVD-based weight pruning can enhance ICL performance. Across all cases, it is evident that SVD-based weight pruning can generally enhance model performance at various depth layers, compared to the baseline performance without weight pruning (indicated by the dashed lines); (ii) Pruning weights in deep layers often results in more stable performance improvements than in shallow layers. Specially, deep layers weight matrices can be drastically reduced without much degradation in model performance, and may even get large improvements on it, while the model performance collapses after a sharp reduction at the shallow layers (see Section \ref{section1} for details). A similar case in \citet{sharma2023truth} notes that a large portion of singular values can be removed from linear layers in large language models without affecting or even improving reasoning performance. However, the underlying mechanism of this phenomenon still remains a mystery, and this paper seeks to explore the issue from the following two aspects.
\begin{figure}[H]
  \centering
    \begin{subfigure}[b]{0.245\textwidth}
        \centering
        \includegraphics[width=\textwidth]{ 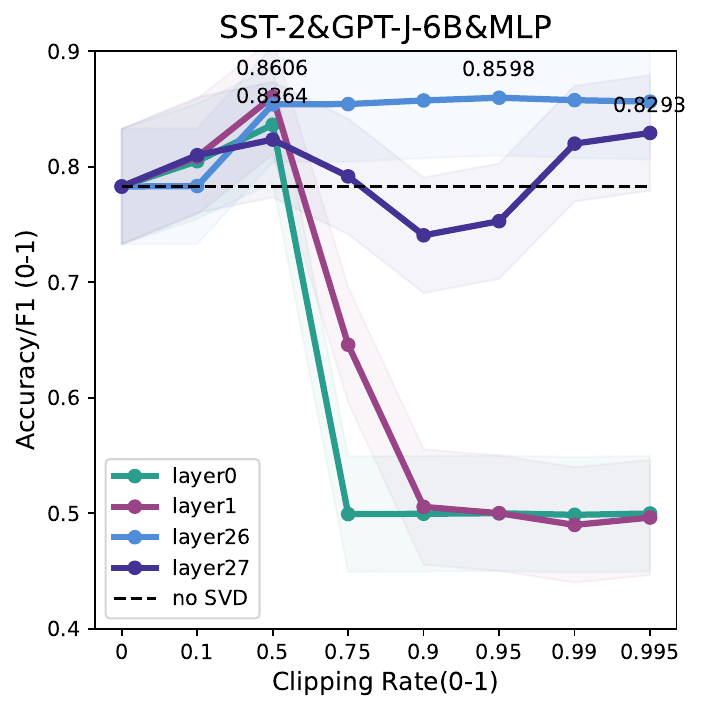}
    \end{subfigure}
    \begin{subfigure}[b]{0.245\textwidth}
        \centering
        \includegraphics[width=\textwidth]{ 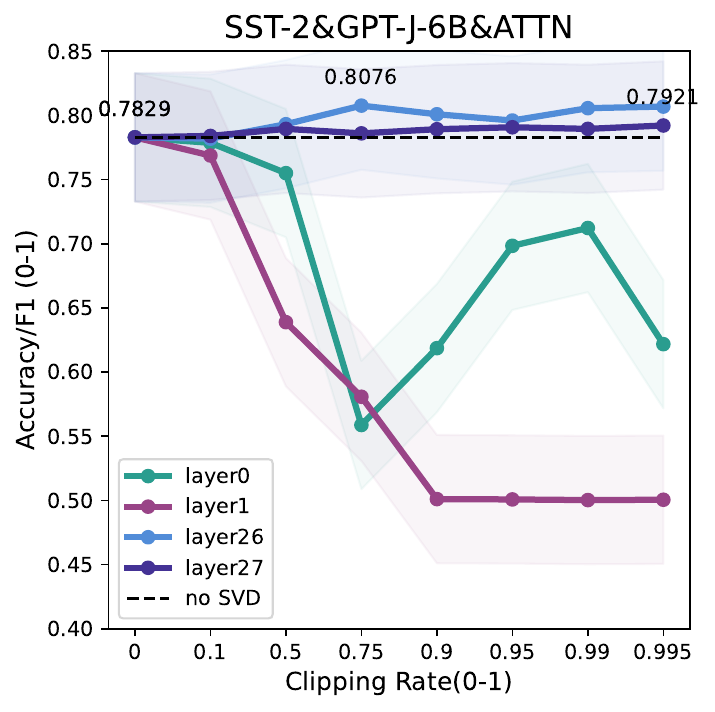}
    \end{subfigure}
    \begin{subfigure}[b]{0.245\textwidth}
        \centering
     \includegraphics[width=\textwidth]{ 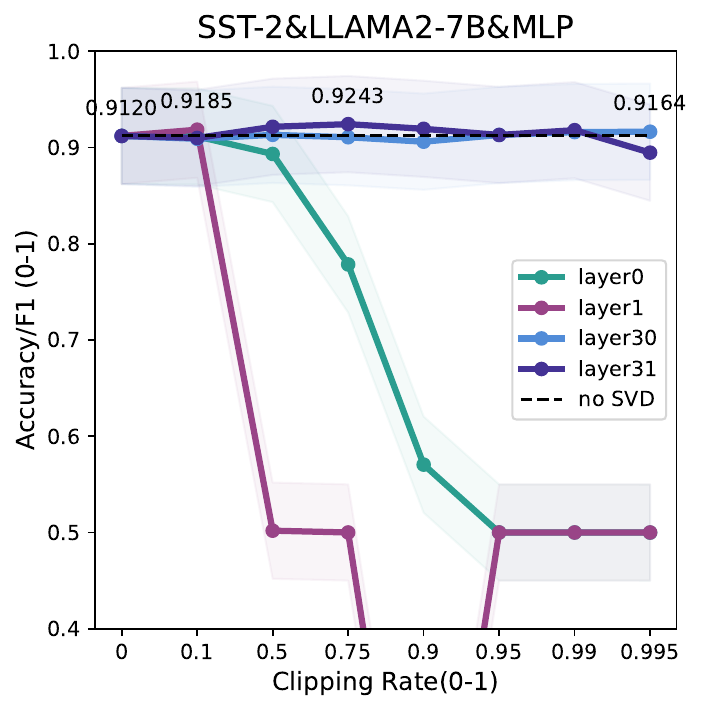}
    \end{subfigure}
    \begin{subfigure}[b]{0.245\textwidth}
        \centering
        \includegraphics[width=\textwidth]{ 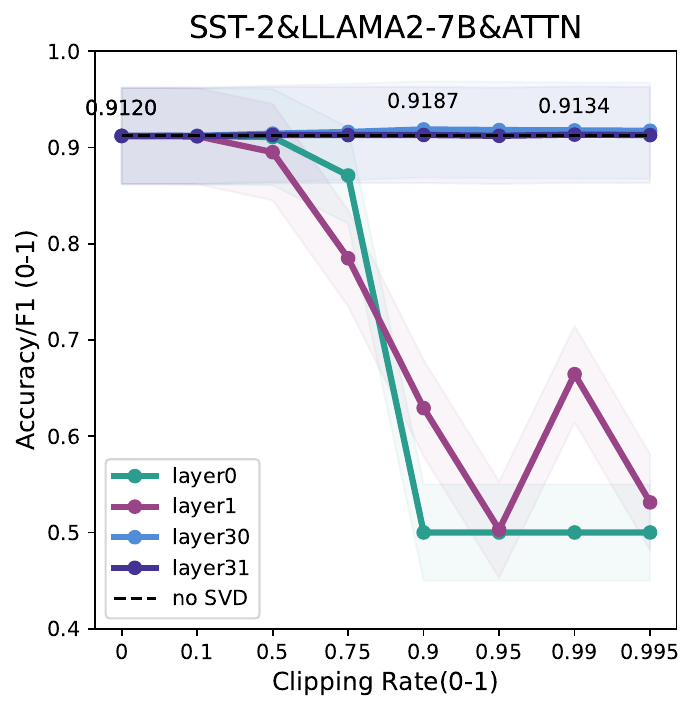}
    \end{subfigure}
    \\
        \begin{subfigure}[b]{0.245\textwidth}
        \centering
        \includegraphics[width=\textwidth]{ 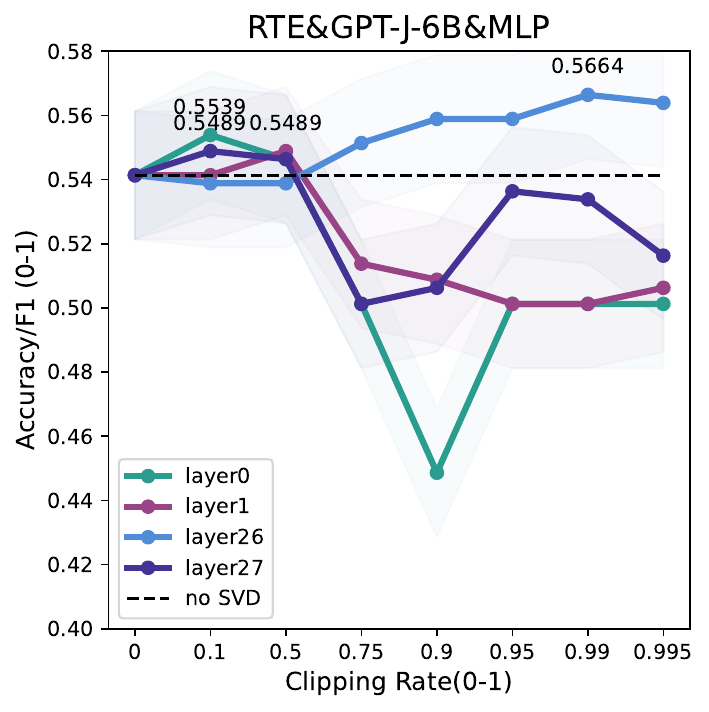}
    \end{subfigure}
    \begin{subfigure}[b]{0.245\textwidth}
        \centering
        \includegraphics[width=\textwidth]{ 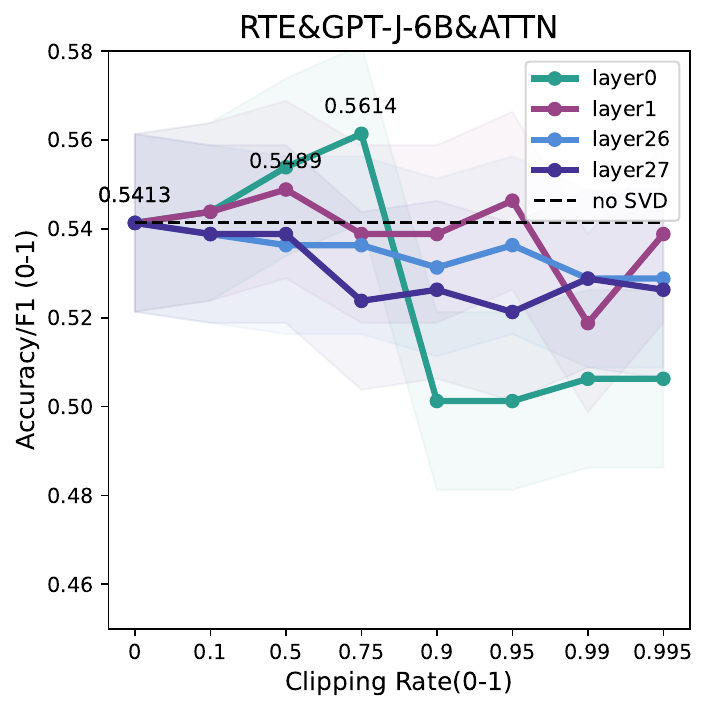}
    \end{subfigure}
    \begin{subfigure}[b]{0.245\textwidth}
        \centering
     \includegraphics[width=\textwidth]{ 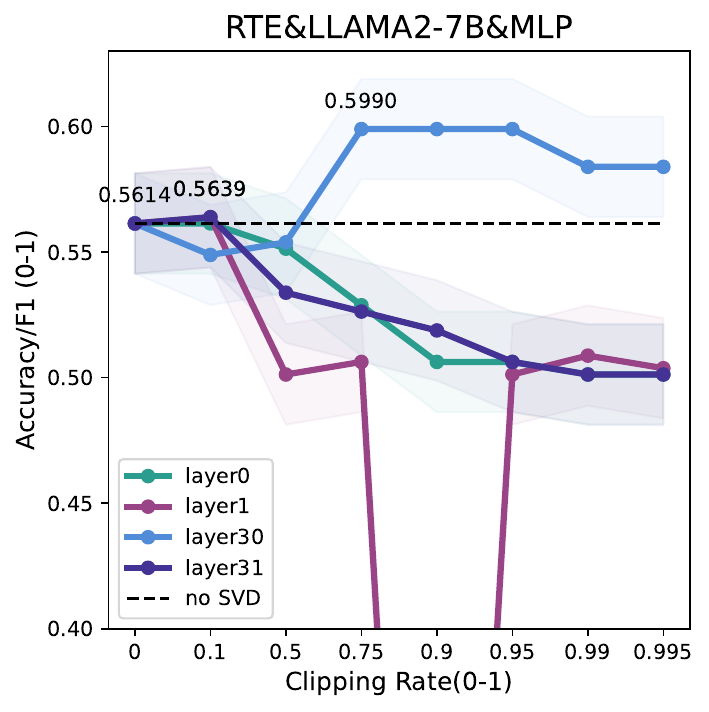}
    \end{subfigure}
    \begin{subfigure}[b]{0.245\textwidth}
        \centering
        \includegraphics[width=\textwidth]{ 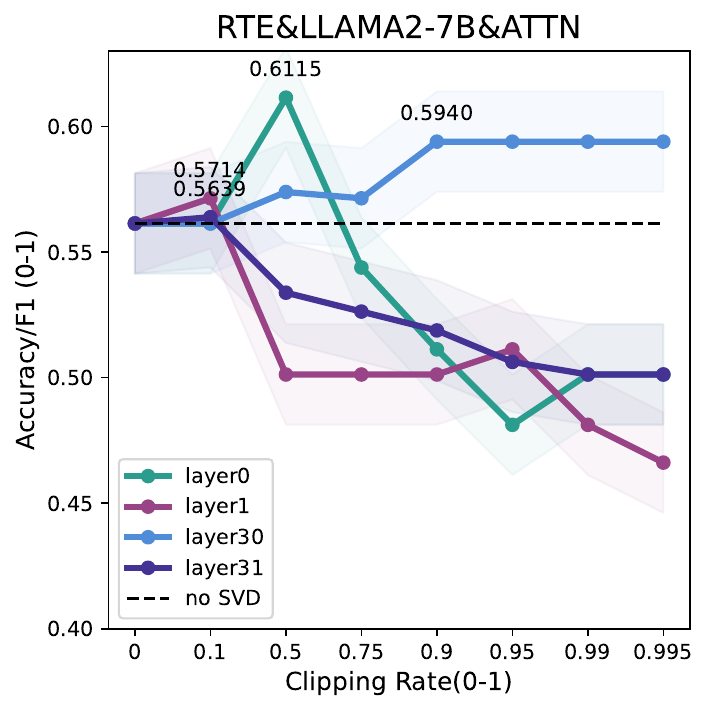}
    \end{subfigure}
        \\
        \begin{subfigure}[b]{0.245\textwidth}
        \centering
        \includegraphics[width=\textwidth]{ 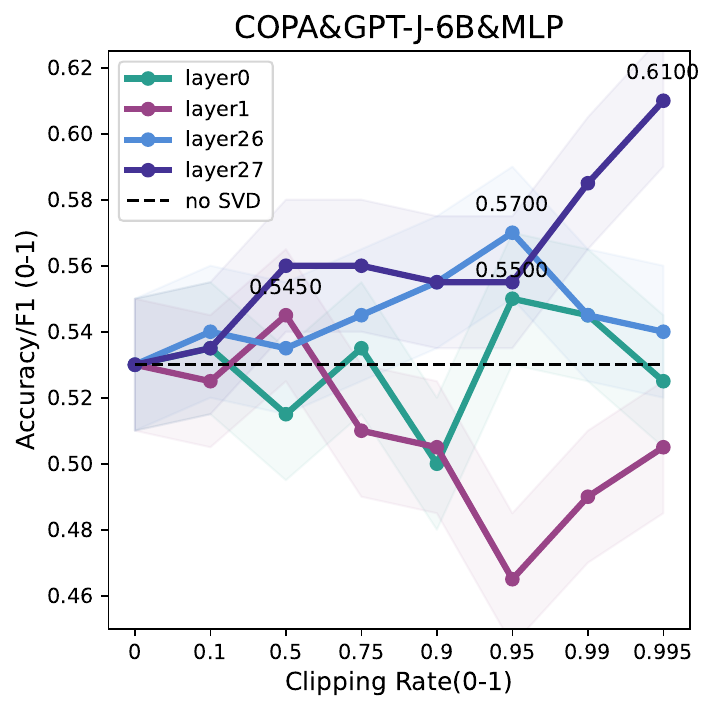}
    \end{subfigure}
    \begin{subfigure}[b]{0.245\textwidth}
        \centering
        \includegraphics[width=\textwidth]{ 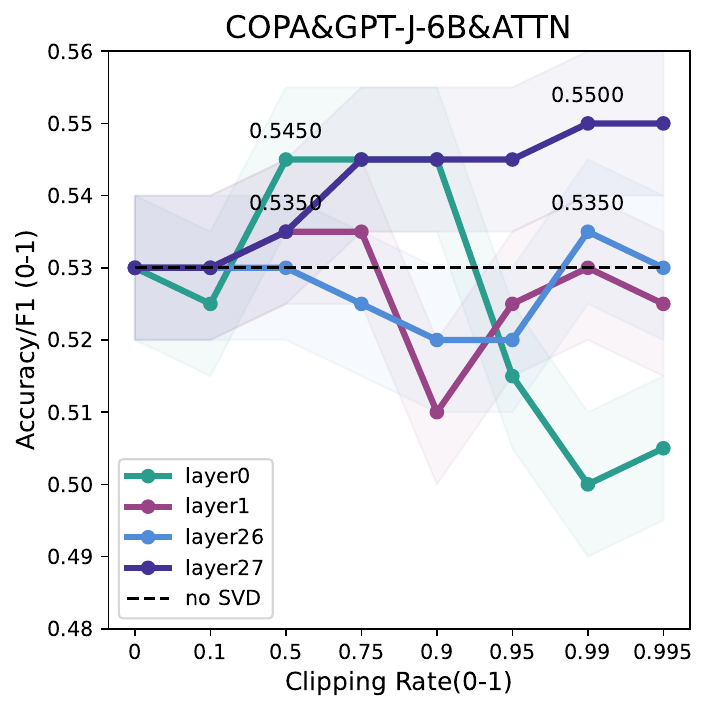}
    \end{subfigure}
    \begin{subfigure}[b]{0.245\textwidth}
        \centering
     \includegraphics[width=\textwidth]{ 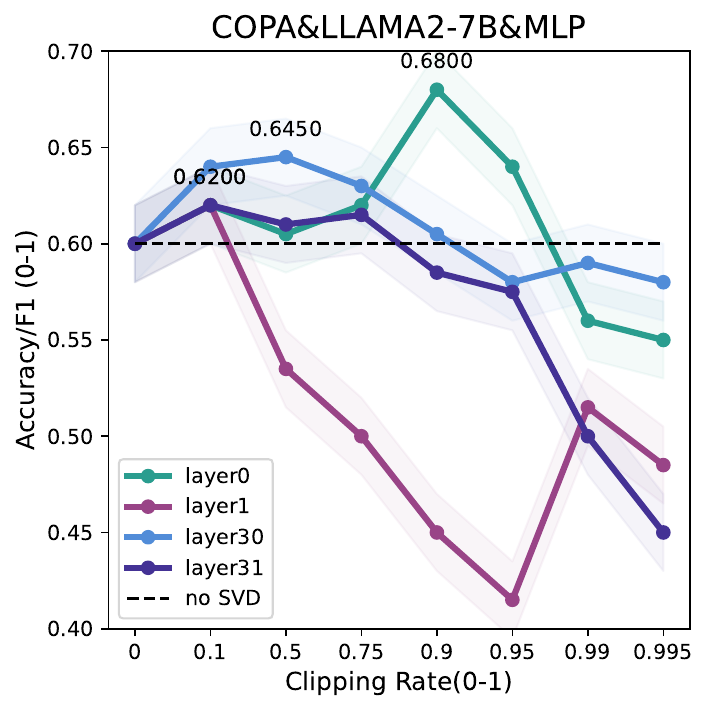}
    \end{subfigure}
    \begin{subfigure}[b]{0.245\textwidth}
        \centering
        \includegraphics[width=\textwidth]{ 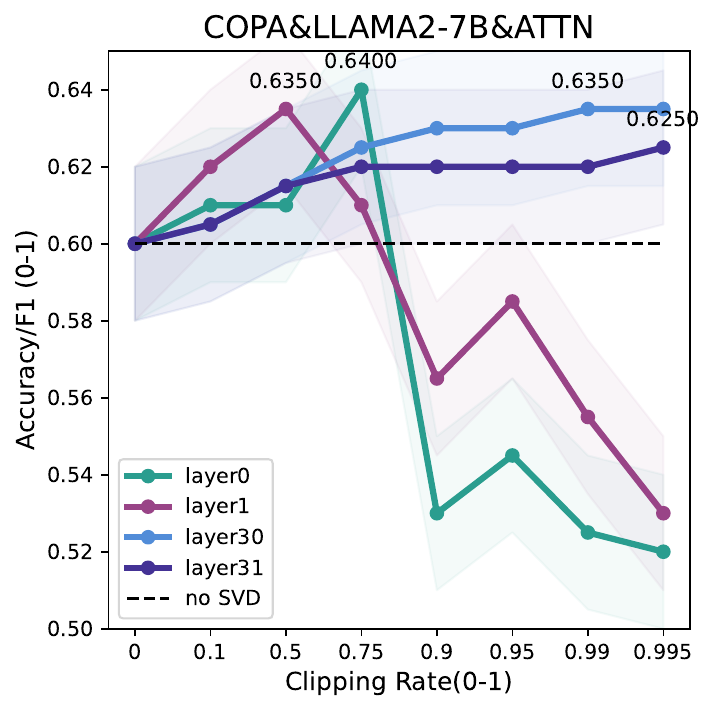}
    \end{subfigure}
  \caption{The effect of weight pruning across different layer types. The figure shows the phenomenon observed on the benchmark datasets (SST-2, RTE, COPA) and open source LLMs (GPT-J-6B and LLAMA2-7B). Each sub-figure corresponds only to the indicated type of dataset, model and module. Notice that this figure mainly focuses on exhibiting the impact of weight pruning to the first two and the last two layers of the model and different colors are used to distinguish between these layers. The dashed line represents the pretrained model performance without SVD. We operate on the whole of MLP or ATTN and specifically marked the points of highest performance. The amount of weight pruning is severe, for instance, the highest model performance sometimes occurs at a clipping rate of 0.995. This is about 99.5\% of the matrix’s original rank. For the definitions of “deep” and “shallow”, please refer to Appendix \ref{def_deep_}.}
  \label{figure1}
\end{figure}
\textbf{Why this phenomenon?} We conduct an in-depth theoretical analysis to explain the findings. To be more specific, we first analyse the ICL form with Transformer from the perspective of ICL as implicit gradient descent fine-tuning, and we present the implicit gradient descent trajectories of ICL in {\hypersetup{linkcolor=black}\hyperref[Theorem 1]{\textbf{Theorem 1}}} in Section \ref{trajectories}.   Afterwards, we use the information-theoretic approaches \citep{xu2017information,polyanskiy2019lecture,wang2022facets} to give the generalization bounds of ICL via full implicit GD trajectories in {\hypersetup{linkcolor=black}\hyperref[Theorem 2]{\textbf{Theorem 2}}} in Section \ref{bound}, explaining ({\hypersetup{linkcolor=black}\hyperref[q1]{\textbf{Q1}}}) why SVD-based weight pruning can enhance ICL performance? and ({\hypersetup{linkcolor=black}\hyperref[q2]{\textbf{Q2}}}) why do deep and shallow layers exhibit different behaviors when their weight matrices are drastically reduced? 

\textbf{How to better use this phenomenon ({\hypersetup{linkcolor=black}\hyperref[q3]{\textbf{Q3}}})?} Based on all our experimental and theoretical insights, we intuitively propose a simple, derivative-free and model-compression algorithm in Section \ref{apply} for downstream tasks in enhancing ICL inference, providing developers and researchers with an effective approach for handling downstream tasks.  Experimental results for our algorithm on benchmark datasets \citep{sst2,agnews,emoc,mrpc,rte,cb,copa} and open source LLMs \citep{wang2021gptj6b,touvron2023llama}
verify that the method can visibly influence performance across different language understanding tasks in ICL inference.

Our primary objective is to establish a general theoretical framework that uncovers the underlying mechanism behind the phenomenon that SVD-based weight pruning can enhance ICL performance. Based on our theoretical insights, one can design new ICL algorithms. Accordingly, we did not directly compare our approach with other pruning methods. The algorithm in Section \ref{apply} is presented solely to illustrate how theoretical analysis can guide experimental procedures effectively.
\subsection{Related Works}
\textbf{Model compression.}  
 In recent years, there has been growing theoretical and experimental evidence that models can be significantly pruned with very little drop in accuracy, thereby significantly reducing their storage requirements. To name a few, \citet{Frankle2018} indicate that neural networks can typically have over 90\% of their weights eliminated with little to no loss in performance. Analogous results have been demonstrated in both feed-forward and convolutional networks used for image classification \citep{li2019learning,wang2021pufferfish,khodak2022initialization,schotthöfer2022lowrank}. More specifically, \citet{sharma2023truth} present that careful pruning done at specific layers of Transformer models can produce significant boosts in performance on some tasks. The discovery of this phenomenon heightens interest in the connection between generalization and over-parameterization \citep{Zhang2017,zhang2021understanding}, it also spurs research into developing pruning strategies that facilitate efficient model inference \citep{Molchanov2016}. Additionally, the works \citep{gromov2024unreasonable,sreenivas2024llmpruningdistillationpractice} find that performance remains nearly unchanged until a significant portion (up to half) of the layers are removed.

\textbf{In-context learning and gradient descent.}  In order to better understand the ICL capabilities, considerable works try to understand ICL capabilities from the perspective of gradient descent. \citet{irie2022dual} and \citet{dai2023gpt} explain ICL as implicit fine-tuning by figuring out a dual form of gradient descent for linear Transformer attention. However, the linear attention setting is less commonly used than Softmax attention in the LLMs and the details of gradient descent such as  the choice of loss function and training data have not been clearly defined.  Therefore, by using weight construction, \citet{vonoswald2023transformers} show the equivalence of linear self-attention mechanism and gradient descent on a linear regression task and \citet{akyürek2023learning} prove that based on gradient descent and closed-form ridge regression, Transformers can learn linear models as learning algorithms. Further without using weight construction, \citet{ren2023incontext} connect Softmax attention with kernels and then give a novel interpretation that ICL with Transformer is really equivalent to a contrastive learning pattern.

\textbf{Information-theoretic generalization bounds.}   The information-theoretic generalization bounds have been developed to analyze the expected generalization error of a learning algorithm. Given that they are dependent of distribution and algorithm, they are ideal tools to study the generalization behaviour of models performed with a specific algorithm. \citet{russo2016controlling} and \citet{xu2017information} first propose the  Mutual information (MI) based bounds,  which are then strengthened by additional techniques \citep{asadi2018chaining,negrea2019information,haghifam2020sharpened,wang2022facets}. Specifically, \citet{negrea2019information} derive MI-based bounds by developing a PAC-Bayes-like bounding technique and \citet*{wang2022facets} develop generalization bounds for SGD via constructing an auxiliary iterative noisy process.
\section{SVD-Based Weight Pruning can Enhance ICL}\label{section1}
This section shows our surprising findings in ICL inference by performing a motivating analysis of three benchmark datasets in conjunction with two open source LLMs, the details are as follows. We choose GPT-J (6B,28 layers) \citep{wang2021gptj6b} and LLAMA2 (7B,32 layers) \citep{touvron2023llama} as our primary models for investigation, owing to their robust ICL performance and moderate model size, which align with our hardware resource.  The attention (ATTN) layers are made up of key, query, value, out
matrices both in GPT-J-6B and LLAMA2-7B. The mlp (MLP) layers in GPT-J-6B consist of input, output
matrices, while the MLP layers in LLAMA2-7B are made up of up, gate, down matrices. For datasets, we use SST-2 \citep{sst2} for sentiment analysis, RTE \citep{rte} for textual entailment and COPA \citep{copa} for causal inference.  Details regarding datasets and how they were used are shown in Appendix \ref{prompts} and \ref{datasetdetails}.

Specially, we use the optimal rank-$r$ approximation mentioned later as SVD-based weight pruning method to show the effect of weight pruning across different layer and module types in Section \ref{ep11}, then we further analyze the effect of the ICL shot numbers on it in Section \ref{ep12}.

\textbf{The optimal rank-$r$ approximation and SVD.}\label{svd} Given a matrix $\mathbf{W}\in \mathbb{R}^{m\times n}$ and a constant $ r\leq \min(m,n), r \in \mathbb{N}$. Eckart–Young–Mirsky theorem \citep{eckart1936approximation} provides an optimal solution $\mathbf{W}^{*}(r)=\mathbf{U}_{:r}\mathbf{\Sigma}_{:r} \mathbf{V}_{:r}^T$, $\mathbf{U}_{:r},\mathbf{V}_{:r}$ are matrices containing the singular vectors corresponding to the largest $r$ singular values $[\sigma]_1^r$. Let $\xi = 1-\frac{r}{\min(m,n)}\in(0,1)$ be the clipping rate.
\subsection{Effect of Weight Pruning across Different Layer and Module Types}\label{ep11}
We plot the results of applying various amounts of clipping rate $\xi$ to each module matrices in the Transformer architecture on the corresponding evaluation index for each dataset, as
depicted in Figure \hyperref[figure1]{1}. These plots are grouped, such that each sub-figure corresponds only to the indicated type of dataset, model and module. Notice that this investigation mainly focuses on assessing the impact of weight pruning to the first two and the last two layers of the model to further clarify the impact of layer depth on the model's performance and different colors are used to distinguish between these layers.

All sub-figures clearly show an interesting phenomenon about these models in ICL inference: SVD-based weight pruning can enhance ICL performance in both shallow and deep layers across different module types. More surprising, deep layers weight matrices can be drastically reduced without much degradation in model performance, and may even get large improvements on it. This suggests that pruning weights in deep layers can effectively reduce model complexity while maintaining or enhancing model performance. And the model performance collapses to 0.5 (the expectation of a random guess in the binary task) after a sharp reduction at the shallow layers by contrast, indicating a higher sensitivity of the model to changes in shallow layers.

Based on the surprising findings in ICL inference mentioned above, three questions are certain to arise: (\textbf{Q1})\label{q1} Why SVD-based weight pruning can enhance ICL performance? (\textbf{Q2})\label{q2} Why do deep and shallow layers exhibit different behaviors when their weight matrices are drastically reduced? (\textbf{Q3})\label{q3} How can we better use the phenomena about ICL in downstream tasks? We will address the first two questions theoretically in Section \ref{bound} and give a heuristic algorithm in Section \ref{apply} to answer the last.
\begin{figure}[H]
  \centering
    \begin{subfigure}[b]{0.245\textwidth}
        \centering
        \includegraphics[width=\textwidth]{ 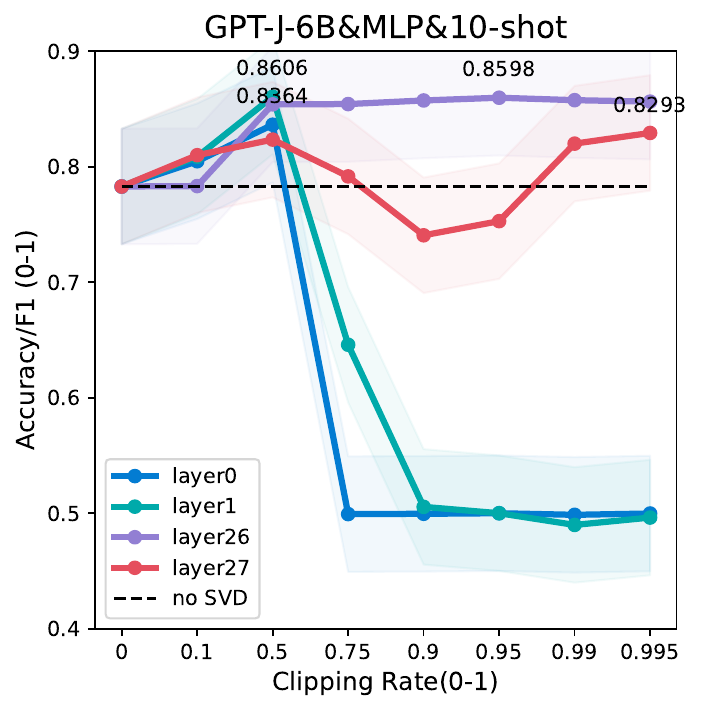}
    \end{subfigure}
    \begin{subfigure}[b]{0.245\textwidth}
        \centering
        \includegraphics[width=\textwidth]{ 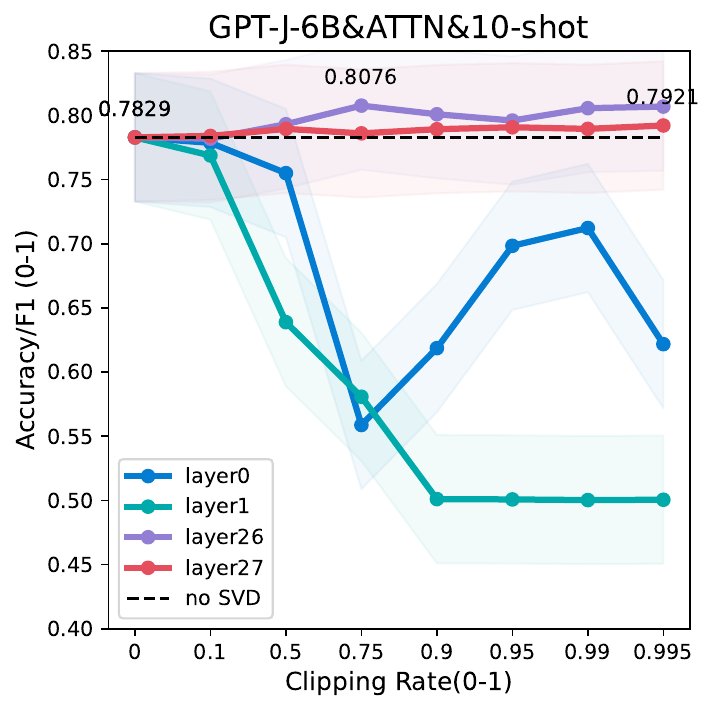}
    \end{subfigure}
    \begin{subfigure}[b]{0.245\textwidth}
        \centering
     \includegraphics[width=\textwidth]{ 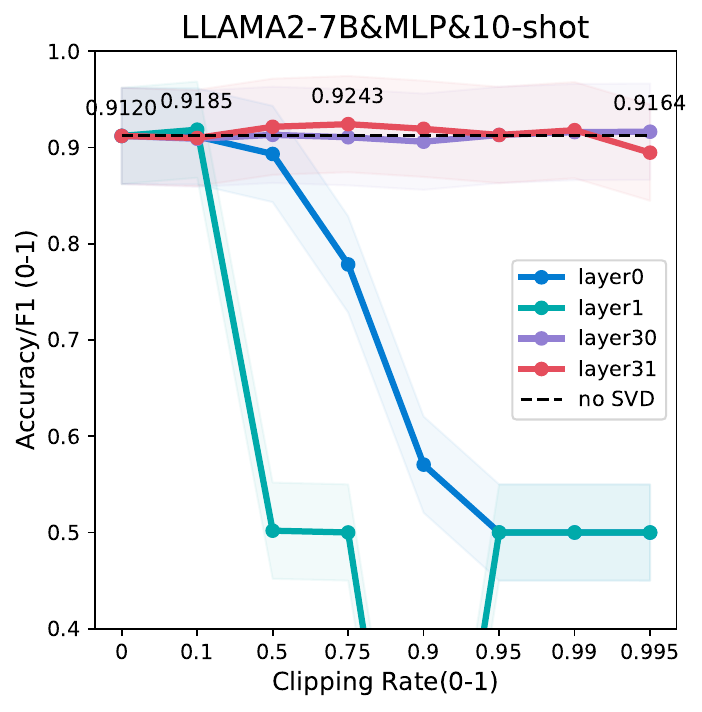}
    \end{subfigure}
    \begin{subfigure}[b]{0.245\textwidth}
        \centering
        \includegraphics[width=\textwidth]{ 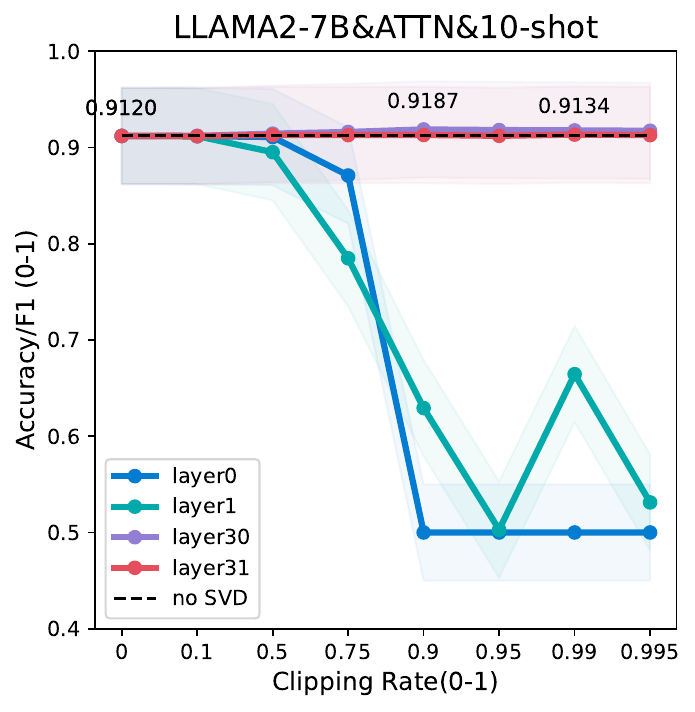}
    \end{subfigure}
    \\
        \begin{subfigure}[b]{0.245\textwidth}
        \centering
        \includegraphics[width=\textwidth]{ 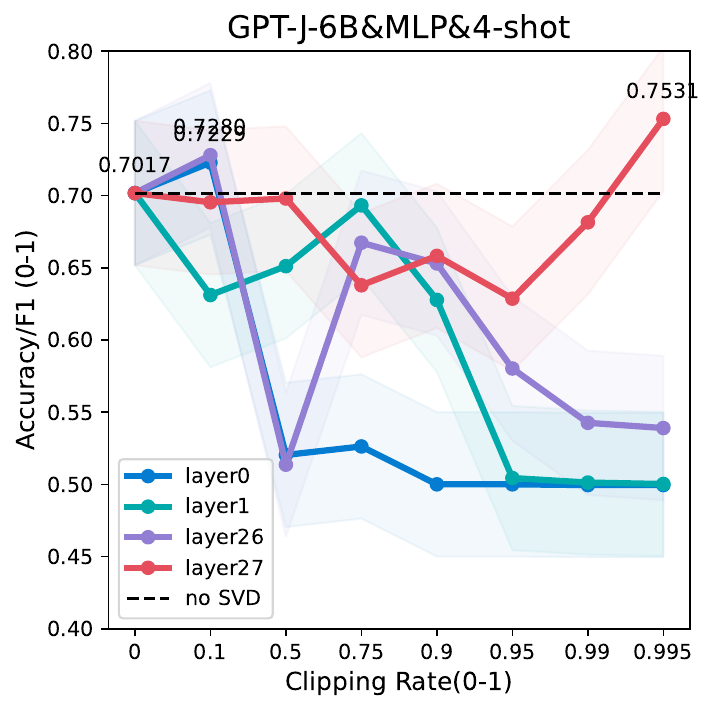}
    \end{subfigure}
    \begin{subfigure}[b]{0.245\textwidth}
        \centering
        \includegraphics[width=\textwidth]{ 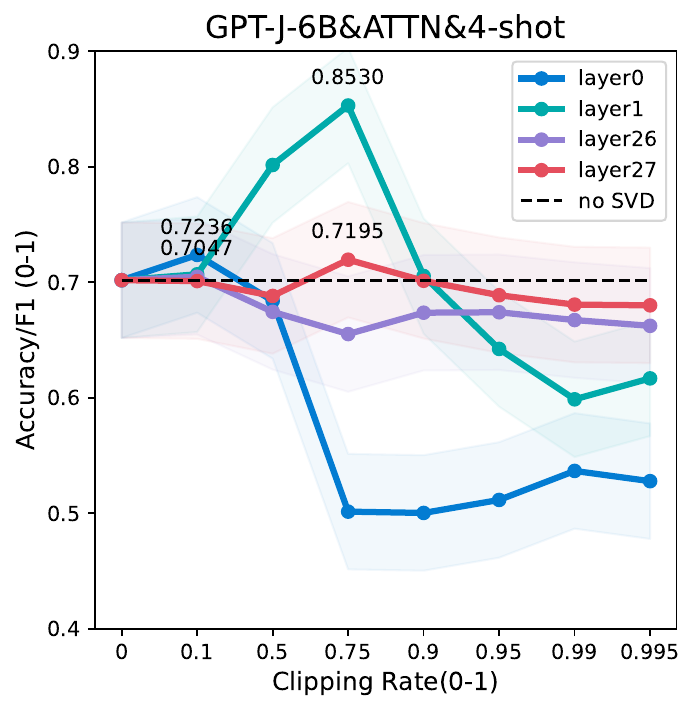}
    \end{subfigure}
    \begin{subfigure}[b]{0.245\textwidth}
        \centering
     \includegraphics[width=\textwidth]{ 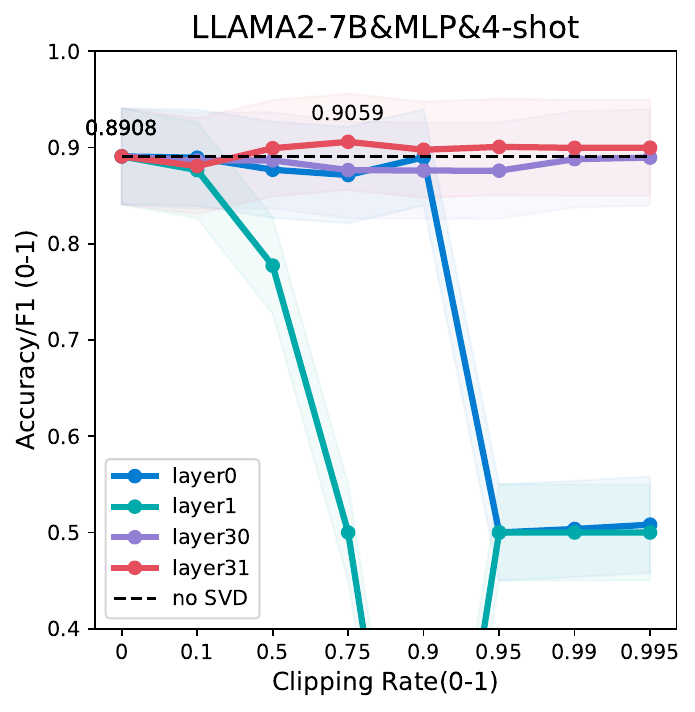}
    \end{subfigure}
    \begin{subfigure}[b]{0.245\textwidth}
        \centering
        \includegraphics[width=\textwidth]{ 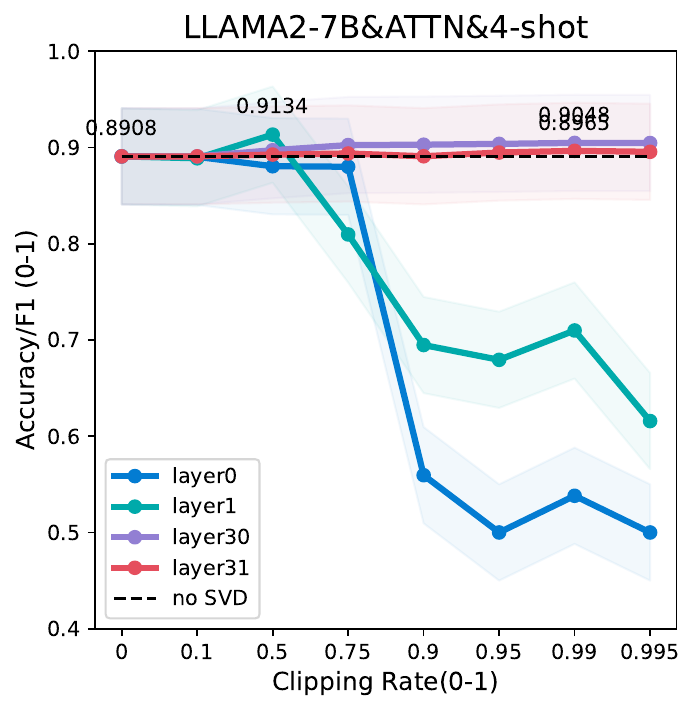}
    \end{subfigure}
    \\
            \begin{subfigure}[b]{0.245\textwidth}
        \centering
        \includegraphics[width=\textwidth]{ 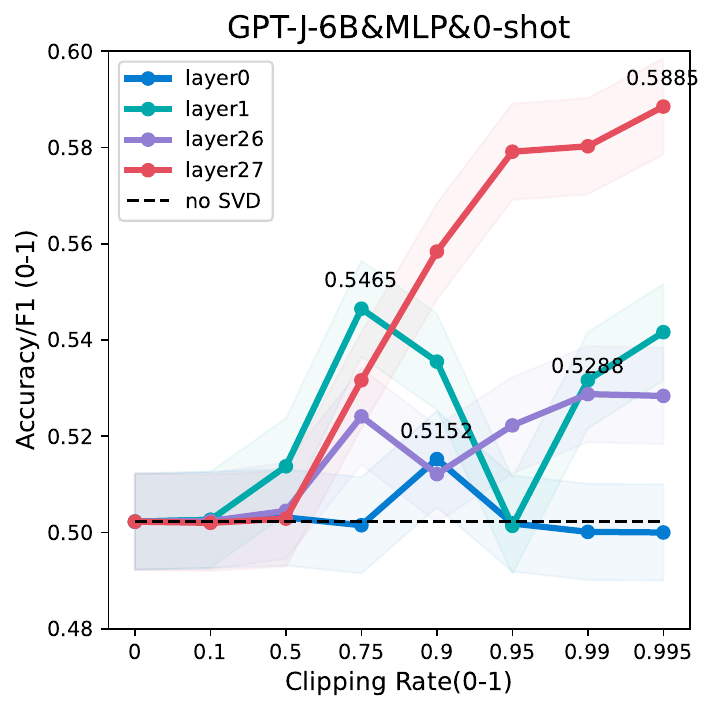}
    \end{subfigure}
    \begin{subfigure}[b]{0.245\textwidth}
        \centering
        \includegraphics[width=\textwidth]{ 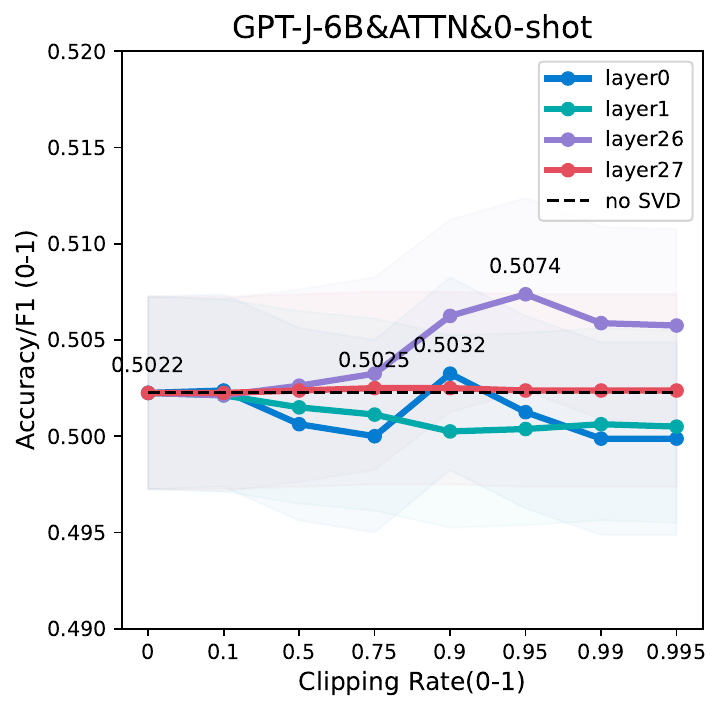}
    \end{subfigure}
    \begin{subfigure}[b]{0.245\textwidth}
        \centering
     \includegraphics[width=\textwidth]{ 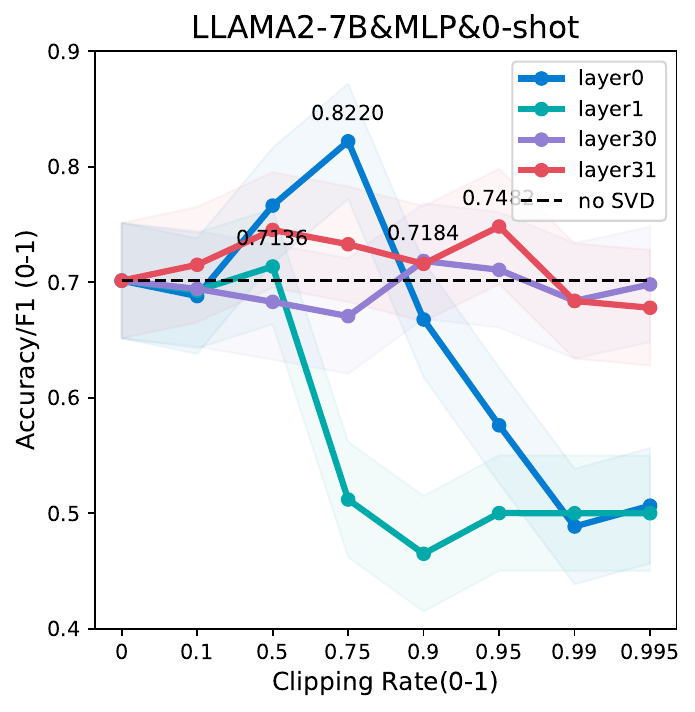}
    \end{subfigure}
    \begin{subfigure}[b]{0.245\textwidth}
        \centering
        \includegraphics[width=\textwidth]{ 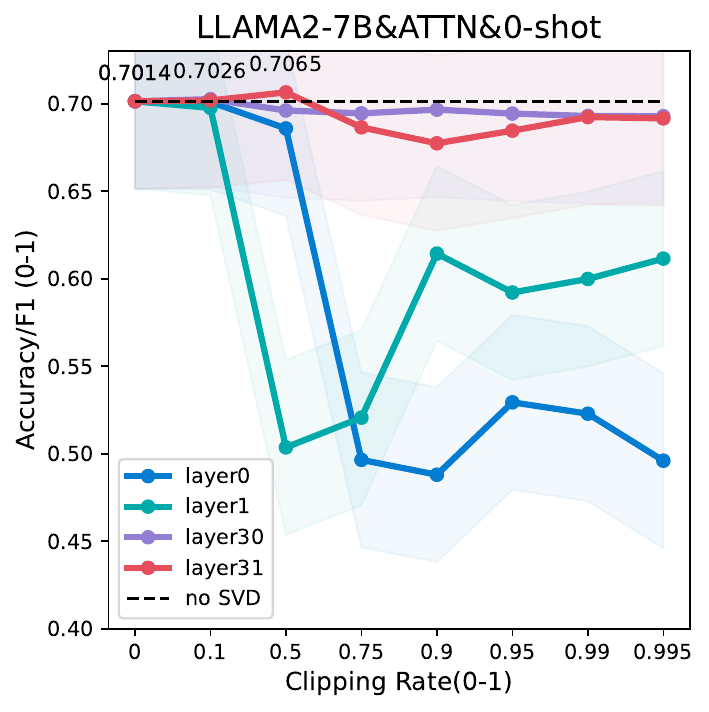}
    \end{subfigure}
  \caption{The effect of different ICL shot numbers is not uniform. Here we show the effect
of different ICL shot numbers on the phenomenon mentioned in Section \ref{ep11} as studied on the SST-2 dataset. Each row represents the results of the same shot numbers in different layers and modules, and each column represents the results of the different shot numbers in different layers of the same module. We also specifically marked the points of highest performance.}\label{figure2}
\end{figure}
\subsection{Effect of Different ICL Shot Numbers}\label{ep12}
Given that ICL can achieve higher performance with more demonstrations \citep{li2023incontext,agarwal2024manyshot}, we further analyze the non-ignorable effect of different ICL shot numbers. To control for other influencing factors, we focus on SST-2 dataset \citep{sst2} and retain the same test set for a single random seed as Section \ref{ep11}. As shown in Figure \ref{figure2}, we compare the settings of three different ICL shot numbers: 0, 4 and 10.  Following this, we analyze how the phenomenon changes across different shot numbers.

Firstly, we note that without weight pruning, the performance of the model improves with an increase in the number of ICL shots. Which is consistent with prior works. Besides, for every shot number, a phenomenon consistent with what is described in Section \ref{ep11} is observed: SVD-based weight pruning can enhance ICL performance, pruning deep layer weight is more stable than pruning shallow weight. Last but not the least, Figure \ref{figure2} also demonstrates roughly that with a decrease in the number of shots, the rate of performance collapse in the model slows down after a sharp reduction at the shallow layer. Intuitively, this is because LLMs exhibit a shift in focus in the ICL setting, which results in a reduced scope of the output space. The more shots there are, the more pronounced the shift becomes, and this also leads to a faster collapse. We also theoretically discuss it in Section \ref{trajectories} and \ref{bound} ({\hypersetup{linkcolor=black}\hyperref[Remark 6.]{\textbf{Remark 6}}}).
\section{Theoretical Analysis Results}
In this section, we first describe the core components of our study by reviewing some basic notations in Section \ref{pre}, and present the implicit gradient descent trajectories of ICL in Section \ref{trajectories}. Afterwards, we give a mutual information based generalization bounds of ICL via full implicit GD trajectories in Section \ref{bound}. Based on all our experimental and theoretical insights, we intuitively propose a derivative-free and effective method for downstream tasks in enhancing ICL inference in Section \ref{apply}. Complete proofs can be found in the Appendix. For ease of qualitative analysis, our theoretical analysis is mainly focuses on linear attention setting. We discuss the standard Softmax attention setting in Appendix \ref{soft} and feed-forward (MLP) layers in  Appendix \ref{mlp}.
\subsection{Preliminaries}\label{pre} 
We let $\mathcal{H}$ be the instance space and $\mu$ be an unknown distribution on $\mathcal{H}$, specifying random variable $\mathbf{h}$. In ICL setting, the model accepts a sequence of input $\mathbf{H}=[\mathbf{H}_s,\mathbf{h}_{N+1}]$ drawn i.i.d. from  $\mu$ , where $\mathbf{H}_s=[\mathbf{h}_1,\mathbf{h}_2,...,\mathbf{h}_N]$ represents the demonstration sample and $\mathbf{h}_{N+1}$ is the test data. In the information-theoretic analysis framework, we let $\mathcal{W}\in \mathbb{R}^d$ be the space of hypotheses related to the model, and Transformer performs an implicit stochastic learning algorithm $\mathcal{A}$ which takes the demonstration sample $\mathbf{H}_s$  as its input and outputs a hypothesis $W\in \mathcal{W}$ according to some conditional distribution $Q_{W|\mathbf{H}_s}$. Similar to previous works \citep{xu2017information,wang2022facets}, we give the definition of expected generalization error.

\textbf{Expected generalization error.}\label{error} Given a loss function $\ell:\mathcal{W} \times \mathcal{H} \rightarrow \mathbb{R}^+$, where $\ell(w, \mathbf{h})$ measures the "unfitness" or "error" of any $\mathbf{h}\in \mathcal{H}$ with respect to a hypothesis $w\in \mathcal{W}$. We take $\ell$ as a continuous function and assume that $\ell$ is differentiable almost everywhere with respect to $w$.
The goal of learning is to find a hypothesis $w$ that minimizes the population risk, and for any $w\in \mathcal{W}$,
the population risk is defined as $L_{\mu}(w) \triangleq \mathbb{E}_{\mathbf{h}\sim\mu}[\ell(w, \mathbf{h})].$  However, since $\mu$ via the sample $\mathbf{H}_s$ can only be partially observed, we instead turn to use the empirical risk, defined as $L_{\mathbf{H}_s}(w) \triangleq \frac{1}{N} \sum_{i=1}^{N} \ell(w, \mathbf{h}_i)$. Then the expected generalization error of  $\mathcal{A}$ is  defined as 
\begin{equation*}
    \widetilde{\text{error}} \triangleq \mathbb{E}_{W,\mathbf{H}_s}[L_{\mu}(W) - L_{\mathbf{H}_s}(W)],
\end{equation*}
where the expectation is taken over $(\mathbf{H}_s,W)\sim\mu^N\otimes Q_{W|\mathbf{H}_s}$. 

\textbf{In-context learning with Transformers\footnote{Please refer to Appendix \ref{mask_m} for an explanation on how Eq.(\ref{eq1}) and Eq.(\ref{eq2}) can utilize the same mask.}.}\label{icl}
By prompt design, most context semantic understanding tasks can be unified into classification tasks. Simplify the form of each token in $\mathbf{H}$ to $\mathbf{h}_i= [\mathbf{x}_i,\mathbf{y}_i]$, where $[\mathbf{x}_i]_1^N\in\mathbb{R}^{din} $ and $[\mathbf{y}_i]_1^N\in\mathbb{R}^{dout}$ are encoded input text and corresponding labels respectively.  The test token has the form $\mathbf{h}_{N+1}= [\mathbf{x}_{N+1},mask]$, where $mask$ is the label needed to predict, and usually set $0$ as its initialization. Therefore, the form of attention with residual connection\footnote{
In real-world scenarios, the residual connection module in Transformers is indispensable.}  is as follows:
\begin{equation}
\hat{\mathbf{H}} =\mathbf{H}+\mathbf{W}_V\mathbf{HM}\ \text{Softmax}\left( \frac{((\mathbf{W}_K\mathbf{H})^T  \mathbf{W}_Q \mathbf{H})}{\sqrt{d_{scale}}} \right),\label{eq1}
\end{equation}
where $\mathbf{W}_V$,$\mathbf{W}_K$,$\mathbf{W}_Q$$\in\mathbb{R}^{(dout+din)\times(dout+din)}$ are projection matrix, and the mask matrix $\mathbf{M}=\left(\begin{array}{cc}   \mathbf{I}_{N\times N}&0 \\  0&0 \\  \end{array}\right)$ is included in the attention. Specifically,$\ \hat{\mathbf{h}}_{N+1}$ can be formulated as
\begin{equation}
\hat{\mathbf{h}}_{N+1} = \mathbf{h}_{N+1}+\mathbf{W}_V\mathbf{HM}\ \text{Softmax}\left( \frac{((\mathbf{W}_K\mathbf{H})^T  \mathbf{W}_Q \mathbf{h}_{N+1})}{\sqrt{d_{scale}}} \right),\label{eq2}
\end{equation}
and for ease of qualitative analysis, Eq.(\ref{eq2}) can be approximated as a relaxed linear attention mechanism by removing the Softmax operation and scale factor:
\begin{equation}
\hat{\mathbf{h}}_{N+1} = \mathbf{h}_{N+1}+\mathbf{W}_V\mathbf{HM}(\mathbf{W}_K\mathbf{H})^T  \mathbf{W}_Q \mathbf{h}_{N+1}.\label{eq3}
\end{equation}
\subsection{The Implicit Gradient Descent Trajectories of ICL}\label{trajectories}
To begin with, we present the implicit gradient descent of ICL, inspired by \citep{irie2022dual,dai2023gpt,ren2023incontext}. These works describe how ICL with attention can be connected to a meta-optimizer which produces implicit gradient. The following lemma demonstrates the result.
\begin{lemma}[The Implicit Gradient Descent of ICL in a Single Linear Attention Layer]
\label{lemma1} Consider a Transformer consists of a single linear layer attention with residual connection, parameterized by $\mathbf{W}_V$,$\mathbf{W}_K$,$\mathbf{W}_Q$ as in Eq.(\ref{eq3}). Same to Section \ref{icl}, let $\mathbf{H}=[\mathbf{H}_s,\mathbf{h}_{N+1}]$ be the input, where $\mathbf{H}_s=[\mathbf{h}_1,\mathbf{h}_2,...,\mathbf{h}_N]$ represents the demonstration sample and $\mathbf{h}_{N+1}$ is the test data. And let $\hat{\mathbf{h}}_{N+1}$ be the single layer output. Then, it holds that 
\begin{align*}
         \Delta \mathbf{W}_{icl} &= \mathbf{W}_V \mathbf{H}_{s} (\mathbf{W}_K \mathbf{H}_{s})^T \mathbf{W}_Q= \left( \sum_{i=1}^N \mathbf{W}_V{\mathbf{h}_i} \otimes \mathbf{W}_K{\mathbf{h}_i} \right) \mathbf{W}_Q,\\
        \hat{\mathbf{h}}_{N+1} &= \mathbf{h}_{N+1}+ \Delta \mathbf{W}_{icl} \mathbf{h}_{N+1},
\end{align*}
where $\mathbf{W}_V\mathbf{H}_s$ is regarded as the meta-gradient of ICL, which is used to generate the implicit gradient matrix $ \Delta \mathbf{W}_{icl}$ to act on the final feature output. See Appendix \ref{pf_lemma1} for a proof.
\end{lemma}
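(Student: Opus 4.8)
The plan is to unfold the relaxed linear attention expression in Eq.(\ref{eq3}) restricted to the test token and show that the correction term factorizes exactly into a matrix times $\mathbf{h}_{N+1}$. First I would write out $\mathbf{H}\mathbf{M}$: since $\mathbf{M}$ is the projection onto the first $N$ coordinates, $\mathbf{H}\mathbf{M} = [\mathbf{h}_1,\dots,\mathbf{h}_N,0] = [\mathbf{H}_s,0]$, so $\mathbf{W}_V\mathbf{H}\mathbf{M} = \mathbf{W}_V\mathbf{H}_s$ padded with a zero column. Similarly $(\mathbf{W}_K\mathbf{H})^T\mathbf{W}_Q\mathbf{h}_{N+1}$ is a column vector whose $i$-th entry is $(\mathbf{W}_K\mathbf{h}_i)^T\mathbf{W}_Q\mathbf{h}_{N+1}$, and the trailing zero column of $\mathbf{W}_V\mathbf{H}\mathbf{M}$ kills the $(N+1)$-th entry. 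Multiplying, the correction term becomes $\sum_{i=1}^N \mathbf{W}_V\mathbf{h}_i \,\big((\mathbf{W}_K\mathbf{h}_i)^T\mathbf{W}_Q\mathbf{h}_{N+1}\big)$.

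The second step is the associativity/rearrangement observation: $(\mathbf{W}_K\mathbf{h}_i)^T\mathbf{W}_Q\mathbf{h}_{N+1}$ is a scalar, so each summand equals $\big(\mathbf{W}_V\mathbf{h}_i (\mathbf{W}_K\mathbf{h}_i)^T\big)\mathbf{W}_Q\mathbf{h}_{N+1}$, i.e. the outer product $\mathbf{W}_V\mathbf{h}_i \otimes \mathbf{W}_K\mathbf{h}_i$ applied to $\mathbf{W}_Q\mathbf{h}_{N+1}$. Pulling the sum inside and factoring $\mathbf{W}_Q\mathbf{h}_{N+1}$ to the right gives
\begin{equation*}
\sum_{i=1}^N \mathbf{W}_V\mathbf{h}_i (\mathbf{W}_K\mathbf{h}_i)^T \mathbf{W}_Q\mathbf{h}_{N+1} = \Big(\sum_{i=1}^N \mathbf{W}_V\mathbf{h}_i \otimes \mathbf{W}_K\mathbf{h}_i\Big)\mathbf{W}_Q\,\mathbf{h}_{N+1}.
\end{equation*}
Then I would identify $\sum_{i=1}^N \mathbf{W}_V\mathbf{h}_i(\mathbf{W}_K\mathbf{h}_i)^T = \mathbf{W}_V\mathbf{H}_s(\mathbf{W}_K\mathbf{H}_s)^T = (\mathbf{W}_V\mathbf{H}_s)(\mathbf{W}_K\mathbf{H}_s)^T$, which is the standard "sum of outer products equals product of stacked matrices" identity; defining $\Delta\mathbf{W}_{icl}$ as this quantity times $\mathbf{W}_Q$ closes the argument, and substituting back into Eq.(\ref{eq3}) yields $\hat{\mathbf{h}}_{N+1} = \mathbf{h}_{N+1} + \Delta\mathbf{W}_{icl}\,\mathbf{h}_{N+1}$.

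There is no real obstacle here — the statement is essentially a bookkeeping identity — so the only thing to be careful about is the role of the mask $\mathbf{M}$: one must check that it is precisely what removes the test token from the key/value sums (so that $\hat{\mathbf{h}}_{N+1}$ attends only to the demonstrations, not to itself), and that the dimensions of the padded zero column line up so the product with the length-$(N+1)$ attention vector is well defined. I would state this explicitly rather than sweeping it under the rug, since it is the one place where an index slip could occur; everything else is linearity and the scalar-commutes-with-matrices trick.
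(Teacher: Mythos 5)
Your argument is correct and is essentially the same computation as the paper's: both reduce the masked linear-attention update to $\mathbf{W}_V\mathbf{H}_s(\mathbf{W}_K\mathbf{H}_s)^T\mathbf{W}_Q\mathbf{h}_{N+1}$ using that $\mathbf{M}$ zeroes out the test column, and both then invoke the sum-of-outer-products identity $\mathbf{W}_V\mathbf{H}_s(\mathbf{W}_K\mathbf{H}_s)^T = \sum_{i=1}^N \mathbf{W}_V\mathbf{h}_i \otimes \mathbf{W}_K\mathbf{h}_i$. The only difference is that the paper's proof prefaces this algebra with a derivation of the dual form between gradient descent on a linear layer $F(\mathbf{x})=\mathbf{Wx}$ and linear attention (showing $\hat F(\mathbf{x}_{\text{test}})=\mathbf{Wx}_{\text{test}}+\big(\sum_i \mathbf{e}_i\otimes\mathbf{x}_i\big)\mathbf{x}_{\text{test}}$), which you omit; that digression is there to justify calling $\Delta\mathbf{W}_{icl}$ an \emph{implicit gradient}, not to prove the stated identity, so your streamlined version loses nothing mathematically.
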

\begin{remark}
    It is worth noting that $\text{rank}(\Delta \mathbf{W}_{icl})\leq \text{rank}(\mathbf{H}_s)\leq N$ by rank relations in matrix multiplication, indicating that this is a low-rank operation. This explains why the effect of different ICL shot numbers is not uniform in Section \ref{ep12}. Details of the discussion in standard Softmax attention setting can be found in Appendix \ref{soft}. Plus, a significant body of work has discovered that the ICL capabilities of models are not robust to the order of ICL sample. For instance, \citet{lu2022fantastically} observed that large language models (LLMs) are sensitive to the sequence of ICL examples, and \citet{liu2023lost} reported that the context-based ICL performance exhibits a U-shaped curve—models tend to perform better with information that appears at the beginning or at the end of the input context. This aligns with observations from actual model training where the sequence of training samples affects outcomes especially when training with a batch size of $1$. 
\end{remark}
According to the above, we further analyze the implicit gradient descent trajectories of ICL and provide the following Theorem. We will denote $\Delta \mathbf{W}_{icl}^t$ by $\Delta \mathbf{W}_t$ when there is no ambiguity, where $t$ represents $t$-th layer.
\begin{theorem}[The Implicit Gradient Descent Trajectories of ICL]
\label{Theorem 1} Consider a Transformer as a stack of $L$ linear attention blocks with residual connection, parameterized by $[\mathbf{W}_V^t]_1^L$,$[\mathbf{W}_K^t]_1^L$,$[\mathbf{W}_Q^t]_1^L$. Denote $[{\mathbf{h}}_i^{t}]_1^{N+1}$ as the output of the $t$-th layer, $[\mathbf{h}_i^0]_1^{N+1}$ as the initial input. Then for $t\in[L]$, it holds that
\begin{align*}
    &\mathbf{G}_t =\Delta \mathbf{W}_t(1+\mathbf{W}_{t-1}) = \Delta \mathbf{W}_t(1+\mathbf{W}_0+\sum_{j=1}^{t-1}\mathbf{G}_j) ,\\
    &{\mathbf{h}}^t_{N+1} = \mathbf{h}^0_{N+1} + \sum_{j=1}^t\mathbf{G}_t \mathbf{h}^0_{N+1}=\mathbf{h}^0_{N+1}+\mathbf{W}_{t}\mathbf{h}^0_{N+1},
\end{align*}
where $\Delta \mathbf{W}_t \triangleq \left( \sum_{i=1}^N \mathbf{W}_V^{t}{\mathbf{h}_i^{t-1}} \otimes \mathbf{W}_K^{t}{\mathbf{h}_i^{t-1}} \right) \mathbf{W}_Q^{t},$ $\mathbf{W}_0=0,\mathbf{W}_{t}=\mathbf{W}_{t-1}+\mathbf{G}_t$ and $\mathbf{G_1}=\Delta \mathbf{W}_1$. See Appendix \ref{pf_Theorem1} for a proof.
\end{theorem}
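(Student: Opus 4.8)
The plan is to proceed by induction on the layer index $t$, unrolling the linear-attention recursion from Eq.~(\ref{eq3}) layer by layer and identifying the accumulated implicit-gradient operator. The base case $t=1$ is exactly Lemma~\ref{lemma1}: with $\mathbf{W}_0 = 0$ we have $\mathbf{G}_1 = \Delta\mathbf{W}_1 = \Delta\mathbf{W}_1(1+\mathbf{W}_0)$, and $\hat{\mathbf{h}}^1_{N+1} = \mathbf{h}^0_{N+1} + \Delta\mathbf{W}_1\mathbf{h}^0_{N+1} = \mathbf{h}^0_{N+1} + \mathbf{W}_1\mathbf{h}^0_{N+1}$, so both identities hold.

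For the inductive step, I would assume that at layer $t-1$ each token output satisfies $\mathbf{h}^{t-1}_i = \mathbf{h}^0_i + \mathbf{W}_{t-1}\mathbf{h}^0_i = (1+\mathbf{W}_{t-1})\mathbf{h}^0_i$ (this needs to hold for \emph{all} tokens $i \le N+1$, not just the test token, which is why the induction hypothesis must be stated at the token level). Applying Lemma~\ref{lemma1} to the $t$-th linear attention block acting on the inputs $[\mathbf{h}^{t-1}_i]_1^{N+1}$ gives $\hat{\mathbf{h}}^t_{N+1} = \mathbf{h}^{t-1}_{N+1} + \Delta\mathbf{W}_t\,\mathbf{h}^{t-1}_{N+1}$ with $\Delta\mathbf{W}_t = \big(\sum_{i=1}^N \mathbf{W}_V^t\mathbf{h}_i^{t-1}\otimes\mathbf{W}_K^t\mathbf{h}_i^{t-1}\big)\mathbf{W}_Q^t$. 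Substituting $\mathbf{h}^{t-1}_{N+1} = (1+\mathbf{W}_{t-1})\mathbf{h}^0_{N+1}$ yields $\hat{\mathbf{h}}^t_{N+1} = (1+\mathbf{W}_{t-1})\mathbf{h}^0_{N+1} + \Delta\mathbf{W}_t(1+\mathbf{W}_{t-1})\mathbf{h}^0_{N+1}$. Now \emph{define} $\mathbf{G}_t \triangleq \Delta\mathbf{W}_t(1+\mathbf{W}_{t-1})$; then $\hat{\mathbf{h}}^t_{N+1} = (1 + \mathbf{W}_{t-1} + \mathbf{G}_t)\mathbf{h}^0_{N+1} = (1+\mathbf{W}_t)\mathbf{h}^0_{N+1}$ using $\mathbf{W}_t = \mathbf{W}_{t-1}+\mathbf{G}_t$. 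Expanding $\mathbf{W}_{t-1} = \mathbf{W}_0 + \sum_{j=1}^{t-1}\mathbf{G}_j$ (itself a trivial telescoping consequence of the recursion $\mathbf{W}_s = \mathbf{W}_{s-1}+\mathbf{G}_s$, $\mathbf{W}_0=0$) gives the first displayed identity, and $\mathbf{W}_t\mathbf{h}^0_{N+1} = \sum_{j=1}^t \mathbf{G}_j\mathbf{h}^0_{N+1}$ gives the telescoped form of the second; a minor typo in the statement writes $\mathbf{G}_t$ inside that sum where $\mathbf{G}_j$ is meant.

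The one point requiring care — and the main obstacle — is that Lemma~\ref{lemma1} as stated only tracks the test token $\hat{\mathbf{h}}_{N+1}$, whereas the induction needs the \emph{same affine relation} $\mathbf{h}^t_i = (1+\mathbf{W}_t)\mathbf{h}^0_i$ to propagate through every demonstration token as well, so that the next layer's $\Delta\mathbf{W}_{t+1}$ can be written in terms of $\mathbf{h}^t_i$. I would therefore first record the straightforward observation that the linear-attention update in Eq.~(\ref{eq3}), because the mask $\mathbf{M}$ zeroes out the test column in the key–value product, applies the \emph{same} linear operator $\Delta\mathbf{W}_t$ to all tokens: for $i \le N$, $\hat{\mathbf{h}}^t_i = \mathbf{h}^{t-1}_i + \mathbf{W}_V^t\mathbf{H}^{t-1}\mathbf{M}(\mathbf{W}_K^t\mathbf{H}^{t-1})^T\mathbf{W}_Q^t\mathbf{h}^{t-1}_i = \mathbf{h}^{t-1}_i + \Delta\mathbf{W}_t\mathbf{h}^{t-1}_i$, identical in form to the test-token update. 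With that uniformity in hand the induction closes cleanly, and no further estimates or calculations are needed — the result is purely an algebraic unrolling.
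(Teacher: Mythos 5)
Your proposal is correct and takes essentially the same route as the paper's proof in Appendix~\ref{pf_Theorem1}, which simply unrolls the per-layer recursion $\mathbf{h}^t_{N+1} = \mathbf{h}^{t-1}_{N+1} + \Delta\mathbf{W}_t\,\mathbf{h}^{t-1}_{N+1}$ and pattern-matches the accumulated operator to define $\mathbf{G}_t$; you formalize the same unrolling as an induction, and you correctly spot the typo (the inner sum should read $\sum_{j=1}^t \mathbf{G}_j$, not $\sum_{j=1}^t \mathbf{G}_t$).

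One small calibration: the step you single out as ``the main obstacle'' is not actually needed for this theorem as stated. Because $\Delta\mathbf{W}_t$ is \emph{defined} directly in terms of the layer-$(t-1)$ demonstration representations $\mathbf{h}_i^{t-1}$ (rather than in terms of $\mathbf{h}_i^0$), the test-token recursion already closes on its own; you never need to relate $\mathbf{h}_i^{t-1}$ back to $\mathbf{h}_i^0$ via $(1+\mathbf{W}_{t-1})$. Your observation that the masked linear-attention update applies the same operator $\Delta\mathbf{W}_t$ to every token is nonetheless correct, and it is the right justification for the paper's remark that $\mathbf{G}_t$ depends only on $\mathbf{W}_{t-1}$ and $\mathbf{H}_s$ (and not on the test token) — but it plays no role in establishing the two displayed identities of Theorem~\ref{Theorem 1}, and the paper accordingly omits it from the proof.
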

\begin{remark}
    Note that the exclusion of Transformer weight ($[\mathbf{W}_K^t]_1^L,[\mathbf{W}_Q^t]_1^L,[\mathbf{W}_V^t]_1^L$) implies that $\mathbf{G}_t$ is only dependent on $\mathbf{W}_{t-1}$ and $\mathbf{H}_s$, this is consistent with gradient descent in terms of relevance. 
\end{remark}
\subsection{Generalization Bounds of ICL via Full Implicit GD Trajectories}\label{bound}
In this part, for simplicity of representation, we flatten the weight matrix into a vector form ($\mathbf{vec(W}_t)=W_t \in \mathbb{R}^d,\mathbf{vec(G}_t)=G_t\in\mathbb{R}^d$) and conduct our analysis within the weight and implicit gradient space of hypotheses $\mathcal{W}$ as detailed in Section \ref{error}. Notably, \citet{ahn2023transformers} observe that, with the optimal parameters,  a single layer linear Transformer implements a single step of preconditioned gradient descent and the preconditioning matrix not only adapts to the distribution of input data but also to the variance caused by data inadequacy. \citet{garg2023transformers} show empirically that the trained Transformer is able to in-context learn the class of linear functions with respect to the prompt distribution, performing comparably to the optimal least squares estimator for any number of in-context examples considered, as exhibited in Figure \ref{figure5a}. 

In addition, we also make a discussion on the noise of the implicit gradient in Section \ref{noise}, our analytical discussion indicates that the implicit gradients produced by Transformers in practical applications are noisy due to factors such as the extent of model pre-training and data characteristics (e.g., ICL shot number). We first present the assumption used in this subsection.

Let ${G}_t\triangleq\frac{1}{N}\sum_{i=1}^{N}\nabla \ell_i$ be the best implicit gradient of ICL that the model can produce, $\tilde{G}_t\triangleq\frac{1}{b}\sum_{i=1}^{b}\nabla \ell_i$ be the implicit gradient of ICL generated by the model in practical applications. $N$ is the threshold and $b$ is the the shot number in the actual input defined in Section \ref{noise}. And $V_t\triangleq G_t-\tilde{G}_t$ is the gradient noise caused by shot number, $C_t\triangleq\frac{N-b}{b(N-1)}(\frac{1}{N}\sum_{i=1}^{N}\nabla \ell_i\nabla \ell_i^T-G_tG_t^T)$ is the implicit gradient noise covariance. Similar to \citet{wang2022facets}'s assumption in SGD, we approximate $V_t$ up to its second moment.
\begin{assumption}
\label{assume1}
Assume the implicit gradient noise $V_t$ follows a Gaussian distribution, i.e., $V_t \sim \mathcal{N}(0, C_t)$, then in ICL implicit gradient descent trajectories,
\begin{equation}
  W_t = W_{t-1} - \eta \tilde{G}_t = W_{t-1} - \eta G_t + \eta C_t^{1/2}  N_t,       \label{eq8}            
\end{equation}
where $N_t \sim \mathcal{N}(0, I_d)$ is the standard Gaussian\footnote{Consider the  continuous SDE: $dW = -\nabla L_{\mathbf{H}_s}(W)dt+[\eta C(W)]^{1/2}d\theta_t$, where $C(W)$  is the gradient noise covariance at $W$ and $\theta_t$ is a Wiener process. We can view Eq.~(\ref{eq8}) as discretization of the SDE.}.
\end{assumption}
\begin{remark}
\label{remark2}
Regarding the validation of this assumption, empirical evidence from works \citep{wu2020noisy,li2021validity}, suggests that SGD and Eq.(\ref{eq8})  can achieve the similar testing performance. Additionally, we refer readers to some recent works \citep{irie2022dual,garg2023transformers,akyürek2023learning,dai2023gpt,ahn2023transformers,vonoswald2023transformers,ren2023incontext}, where the authors empirically verify that in-context learning of Transformer can achieve the similar testing performance to SGD. Together suggesting that studying Eq.(\ref{eq8}) is arguably sufficient to understand generalization properties of ICL.
\end{remark}
{\hypersetup{linkcolor=black}\hyperref[Theorem 1]{\textbf{Theorem 1}}} indicates that the initial parameter $W_0=0$, which  is independent of all other random variables. And an $L$-layer Transformer does implicit GD of ICL stops after $L$ updates, outputting $W_L$ as the implicit learned parameter. Our main results are mutual information based expected generalization error bounds of ICL, as presented in {\hypersetup{linkcolor=black}\hyperref[Theorem 2]{\textbf{Theorem 2}}}.
\begin{theorem}[The Generalization Bounds of ICL via Full Implicit Gradient Descent Trajectories]
    \label{Theorem 2}
Under the conditions of $\ ${\hypersetup{linkcolor=black}\hyperref[Theorem 1]{\textbf{Theorem 1}}} and {\hypersetup{linkcolor=black}\hyperref[assume1]{\textbf{Assumption 1}}}, assume the implicit  gradient noise covariance $C_t$ is a positive-define matrix, the loss $\ell(w, \mathbf{h})$ is R-subGaussian for any $w\in \mathcal{W} \in \mathbb{R}^d$, then 
\begin{align*}
    \widetilde{\text{error}} &\leq \sqrt{\frac{R^2}{N} \sum_{t=1}^{L} \mathbb{E}^{\mathbf{H}_s}_{\mathbf{W}_{t-1}} \left[ d\log \left(\frac{\left\lVert\Delta \mathbf{W}_t\right\rVert_F^2\cdot\left\lVert1+\sum_{j=1}^{t-1}\mathbf{G}_j\right\rVert_F^2+ \mathrm{tr}\{C_t\}}{d}\right) - \mathrm{tr}\{\log C_t\} \right]},
\end{align*}  
where $\mathbf{vec(G}_t)\in \mathbb{R}^d$, $\Delta \mathbf{W}_t \triangleq \left( \sum_{i=1}^N \mathbf{W}_V^{t}{\mathbf{h}_i^{t-1}} \otimes \mathbf{W}_K^{t}{\mathbf{h}_i^{t-1}} \right) \mathbf{W}_Q^{t}$ and $\mathbf{G}_t=\Delta \mathbf{W}_t(1+\mathbf{W}_0+\sum_{j=1}^{t-1}\mathbf{G}_j) =\Delta \mathbf{W}_t(1+\mathbf{W}_{t-1})$.
$\mathrm{tr}\{\cdot\}$ denotes the trace of a matrix, $\left\lVert\cdot\right\rVert_F$ denotes the Frobenius norm of a matrix and  $\mathbb{E}^X_Y$  is the conditional expectation. Proof details in Appendix \ref{pf_Theorem2}.
\end{theorem}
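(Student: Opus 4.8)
The plan is to combine the standard information-theoretic (mutual-information) generalization bound with the Gaussian-channel structure induced by the implicit GD trajectory of Theorem~1 and Assumption~1. First I would invoke the classical result (e.g.\ \citet{xu2017information}) that for an $R$-subGaussian loss, $\widetilde{\text{error}} \le \sqrt{\tfrac{2R^2}{N}\, I(W_L;\mathbf{H}_s)}$, and then bound the mutual information by the total information accumulated along the trajectory. Since $W_0=0$ is deterministic and each update $W_t = W_{t-1}-\eta G_t + \eta C_t^{1/2}N_t$ forms a Markov chain, a chain-rule/data-processing argument gives $I(W_L;\mathbf{H}_s) \le \sum_{t=1}^L I(W_t;\mathbf{H}_s \mid W_{t-1})$, and each term is at most the conditional KL-type quantity arising from the Gaussian perturbation, i.e.\ one bounds $I(W_t;\mathbf{H}_s\mid W_{t-1})$ by the differential-entropy gap $h(\eta C_t^{1/2}N_t + \eta G_t \mid W_{t-1}) - h(\eta C_t^{1/2}N_t)$ (the noise being independent of $\mathbf{H}_s$ given $W_{t-1}$).

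Next I would control this entropy gap. The subtracted term is the entropy of a Gaussian with covariance $\eta^2 C_t$, contributing $\tfrac12\log\big((2\pi e)^d \eta^{2d}\det C_t\big)$, i.e.\ the $-\mathrm{tr}\{\log C_t\}$ piece (up to the $\eta$ and constant factors that get absorbed). For the first term I would use the maximum-entropy principle: among all distributions with a given second moment, the Gaussian maximizes entropy, so $h(\cdot\mid W_{t-1}) \le \tfrac12\log\big((2\pi e)^d \det(\mathrm{Cov})\big) \le \tfrac{d}{2}\log\big(\tfrac{1}{d}\mathrm{tr}(\mathrm{Cov})\big)$ by AM--GM on the eigenvalues. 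The conditional second moment of $\eta G_t + \eta C_t^{1/2}N_t$ given $W_{t-1}$ is $\eta^2(\mathbb{E}[\|G_t\|^2\mid W_{t-1}] + \mathrm{tr}\{C_t\})$ (cross term vanishes since $N_t$ is zero-mean independent), and here I would substitute the trajectory identity $G_t = \Delta\mathbf{W}_t(1+\mathbf{W}_{t-1})$ from Theorem~1, using submultiplicativity of the Frobenius norm to get $\|G_t\|_F^2 \le \|\Delta\mathbf{W}_t\|_F^2\cdot\|1+\sum_{j=1}^{t-1}\mathbf{G}_j\|_F^2$ (recalling $\mathbf{W}_{t-1}=\sum_{j=1}^{t-1}\mathbf{G}_j$ since $\mathbf{W}_0=0$). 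Taking the conditional expectation $\mathbb{E}^{\mathbf{H}_s}_{\mathbf{W}_{t-1}}$, pushing it inside the concave $\log$ by Jensen's inequality, and summing over $t\in[L]$ yields exactly the expression in the bracket; wrapping the sum in the outer square root from the subGaussian bound gives the stated inequality.

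The main obstacle I anticipate is making the Gaussian-perturbation mutual-information step fully rigorous: one needs to justify that, conditioned on $W_{t-1}$, the randomness entering $W_t$ that depends on $\mathbf{H}_s$ is only through $G_t$ and the independent noise $N_t$, so that $I(W_t;\mathbf{H}_s\mid W_{t-1})$ is genuinely upper-bounded by the entropy gap of an additive-Gaussian-noise channel with (random) ``signal'' $\eta G_t$; this requires care because $C_t$ itself may depend on $\mathbf{H}_s$ (it is built from the $\nabla\ell_i$). I would handle this by conditioning on $W_{t-1}$ and treating $C_t$ as part of the channel, using that $N_t$ is drawn fresh and independent, and invoking the Gaussian maximum-entropy bound conditionally; the positive-definiteness assumption on $C_t$ ensures $\log C_t$ and the Gaussian entropy are well defined. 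A secondary technical point is the harmless absorption of the step size $\eta$ and the $(2\pi e)$ constants into the constant in front (hence the clean $R^2/N$ rather than a messier expression), which I would note explicitly rather than track.
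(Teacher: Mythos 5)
Your proposal is correct and lands on essentially the same structure as the paper's proof: (i) the $R$-subGaussian mutual-information bound of \citet{xu2017information}; (ii) unrolling $I(W_L;\mathbf{H}_s)$ along the implicit GD trajectory using the Markov/data-processing structure (this is exactly \citet{wang2022facets}'s Lemma~4, which the paper cites); and (iii) a Gaussian-channel estimate for each per-step term. The one place where your route differs in presentation is step (iii): you bound $I(\mathcal{G}_t;\mathbf{H}_s\mid W_{t-1})$ via the differential-entropy gap $h(\mathcal{G}_t\mid W_{t-1})-h(\mathcal{G}_t\mid\mathbf{H}_s,W_{t-1})$ together with the max-entropy bound and AM--GM on eigenvalues, whereas the paper invokes the variational characterization $I(X;Y)=\inf_P\mathbb{E}[D_{KL}(Q_{Y|X}\|P)]$ (its Lemmas~4--5), plugs in an isotropic Gaussian prior $\mathcal{N}(\tilde g_t,\sigma_t^2 I_d)$, computes the Gaussian-Gaussian KL, and then optimizes $\sigma_t$ (then sets $\tilde g_t=0$). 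These two derivations are equivalent: the optimal isotropic Gaussian prior is exactly the max-entropy Gaussian whose trace matches the conditional second moment, and the optimized $\sigma_t^*=\sqrt{A_1(t)/d}$ reproduces your AM--GM step, so they yield the same $d\log(\cdot/d)-\mathrm{tr}\{\log C_t\}$ expression. Your remaining steps (substituting $\mathbf{G}_t=\Delta\mathbf{W}_t(1+\mathbf{W}_{t-1})$, Frobenius submultiplicativity, absorbing $\eta$ and $2\pi e$ constants since MI is scale-invariant) also match the paper.

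One detail to be careful about: you say you "push the expectation inside the concave $\log$ by Jensen's inequality" to match the theorem's displayed form. Jensen goes the wrong way for an upper bound ($\mathbb{E}[\log X]\le\log\mathbb{E}[X]$), so this step as stated would tighten, not loosen, the bound, and is therefore not a legitimate move. What the entropy-gap/max-entropy argument actually produces is $d\log(\mathbb{E}_{\mathbf{H}_s}[\cdot]/d)$ with the $\mathbf{H}_s$-expectation inside the log (matching the paper's Lemma~6, where $A_1(t)$ is itself the conditional expectation). The paper's own restatement as $\mathbb{E}^{\mathbf{H}_s}_{W_{t-1}}[d\log(\cdot)-\cdots]$ carries the same notational looseness, so your proof does reach what the proof in the paper actually establishes; just drop the Jensen step and keep the $\mathbf{H}_s$-expectation inside the log to stay rigorous.

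Your flagged obstacle (that $C_t$ depends on $\mathbf{H}_s$) is handled correctly: conditioning on both $W_{t-1}$ and $\mathbf{H}_s$, the update is exactly Gaussian with covariance $C_t$, while the prior/max-entropy comparator is allowed to depend only on $W_{t-1}$ — precisely the asymmetry both your argument and the paper's variational argument exploit.
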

\begin{remark}[Deal with {\hypersetup{linkcolor=black}\hyperref[q1]{\textbf{Q1}}}]
    \label{Remark 4.} \textbf{Theorem 2} indicates that one can control the generalization performance of ICL via controlling the implicit gradient norm along the entire ICL implicit GD trajectories. Specifically, modulating the norm of $[\mathbf{G}_t]_1^L$ or $[\Delta \mathbf{W}_{t}]_1^L$  may enhance performance when utilizing ICL. Note that controlling implicit gradient norm can also control the magnitude of the trace of implicit gradient noise covariance. This elucidates why weight pruning through SVD, even if it only alters a single weight matrix, can confer advantages on the performance of Transformers in ICL inference. We will present an example below demonstrating how weight pruning can affect the norm of $\mathbf{G}_t$ or $\Delta \mathbf{W}_t$, thereby influencing the expected generalization error. Additional examples provided in Appendix \ref{example Remark 4}. This is also why, as illustrated in Figure \ref{figure1}, the highest model performance sometimes occurs at a clipping rate of 0.995. Furthermore, this could elucidate the utility of normalization in Transformers.
    However, weight pruning also impacts the expressive power, implying that increased pruning does not invariably lead to better outcomes. This clarifies why the highest model performance may also occur at clipping rates lower than 0.995.
\end{remark}
\begin{example}[Prune $\mathbf{W}_Q,\mathbf{W}_K,\mathbf{W}_V$]
    \label{example1}Consider $\Delta \mathbf{W}_{k} = \left( \sum_{i=1}^N \mathbf{W}_V^{k}{\mathbf{h}_i^{k-1}} \otimes \mathbf{W}_K^{k}{\mathbf{h}_i^{k-1}} \right) \mathbf{W}_Q^{k}$ when $t=k$. And we primarily consider the changes in a upper bound written as $UB(\left\lVert\Delta \mathbf{W}_{k}\right\rVert_{F}^2)\triangleq\sum_{i=1}^N\left\lVert\mathbf{W}_V^{k}{\mathbf{h}_i^{k-1}} \otimes \mathbf{W}_K^{k}{\mathbf{h}_i^{k-1}}\right\rVert_F^2\left\lVert\mathbf{W}_{Q}^k\right\rVert_F^2$. Let $r$ represents the remained rank  and $\delta$ represents the potential noise consisting of parts with small singular values, that is, $\mathbf{W}_V^k = \mathbf{W}_{V_r}^k + \delta_V = \mathbf{U}_{:r}^V  \mathbf{\Sigma}_{:r}^V (\mathbf{V}_{:r}^V)^T + \delta_V$, the same operation is applied to $\mathbf{W}_Q^k$ and $\mathbf{W}_K^k$ as well. Then we have
\begin{align*}
        \left\lVert \mathbf{W}_V^{k} \mathbf{h}_i^{k-1} \right\rVert_2^2 &=
    \left\lVert (\mathbf{W}_{V_r}^{k} + \delta_V) \mathbf{h}_i^{k-1} \right\rVert_2^2 \\
    &= \left\lVert \mathbf{W}_{V_r}^{k} \mathbf{h}_i^{k-1} \right\rVert_2^2 + 2(\mathbf{W}_{V_r}^{k}\mathbf{h}_i^{k-1})^T(\delta_V\mathbf{h}_i^{k-1}) + \left\lVert \delta_V \mathbf{h}_i^{k-1} \right\rVert_2^2 \\ 
    &= \left\lVert \mathbf{W}_{V_r}^{k} \mathbf{h}_i^{k-1} \right\rVert_2^2 + \left\lVert \delta_V \mathbf{h}_i^{k-1} \right\rVert_2^2 \geq \left\lVert \mathbf{W}_{V_r}^{k} \mathbf{h}_i^{k-1} \right\rVert_2^2,
\end{align*}
where $\mathbf{W}^k_{V_r}=\mathbf{U}_{:r}^V\mathbf{\Sigma}_{:r}^V(\mathbf{V}_{:r}^V)^T$ and $\delta_V=\mathbf{U}_{r:}^V\mathbf{\Sigma}_{r:}^V(\mathbf{V}_{r:}^V)^T$, and $\mathbf{U}_{:r},\mathbf{U}_{r:}$ are orthometric (properties of SVD), and further have
\begin{align*}
UB(\left\lVert\Delta \mathbf{W}_{k}\right\rVert_{F}^2)&\geq    \sum_{i=1}^N\left\lVert\mathbf{W}_{V_r}^{k}{\mathbf{h}_i^{k-1}} \otimes \mathbf{W}_K^{k}{\mathbf{h}_i^{k-1}}\right\rVert_F^2\left\lVert\mathbf{W}_{Q}^k\right\rVert_F^2\tag{*}\\
&\geq    \sum_{i=1}^N\left\lVert\mathbf{W}_{V_r}^{k}{\mathbf{h}_i^{k-1}} \otimes \mathbf{W}_{K_r}^{k}{\mathbf{h}_i^{k-1}}\right\rVert_F^2\left\lVert\mathbf{W}_{Q}^k\right\rVert_F^2\\
&\geq    \sum_{i=1}^N\left\lVert\mathbf{W}_{V_r}^{k}{\mathbf{h}_i^{k-1}} \otimes \mathbf{W}_{K_r}^{k}{\mathbf{h}_i^{k-1}}\right\rVert_F^2\left\lVert\mathbf{W}_{Q_r}^k\right\rVert_F^2=UB(\left\lVert\Delta \mathbf{W}_{k}(r)\right\rVert_{F}^2)\tag{**},
\end{align*}
where Eq.~(*) is by\footnote{For any vector $\mathbf{a}$ and $\mathbf{b}$, $\text{rank}(\mathbf{a}\otimes\mathbf{b})=1$, so the matrix ($\mathbf{a}\otimes\mathbf{b}$) only has one non-zero singular value. Combined with $||\mathbf{P}||_F=\sqrt{\sum_{i}\sigma_i^2(\mathbf{P})}$ and singular value is nonnegative, we can get $||\mathbf{a}\otimes\mathbf{b}||_F = \sqrt{\sum_{i}\sigma_i^2(\mathbf{a}\otimes\mathbf{b})}=\max_i[\sigma_i(\mathbf{a}\otimes\mathbf{b})]$, therefore, the unique non-zero singular value will decrease after performing SVD on $\mathbf{a}$ and/or $\mathbf{b}$. } 
\begin{align*}
\left\lVert\mathbf{vec(A)}\otimes \mathbf{vec(B)}\right\rVert_F = \sqrt{\sum_i\sum_j|\mathbf{vec(A)}_i\mathbf{vec(B)}_j|^2}=\max_i|\sigma_{i}(\mathbf{vec(A)}\otimes \mathbf{vec(B)})|
\end{align*}
and Eq.~(**) is by $\left\lVert\mathbf{P}\right\rVert_F=\sqrt{\sum_i\sigma_i^2(\mathbf{P})}$ for any matrix $\mathbf{A,B}$ and $\left\lVert\mathbf{P}\right\rVert_F \geq \left\lVert\mathbf{P}(r)\right\rVert_F$. $UB(\left\lVert\Delta \mathbf{W}_{k}(r)\right\rVert_{F}^2)$ is the upper bound on $\lVert\Delta \mathbf{W}_{k}\lVert_{F}^2$ after using SVD.
\end{example}
\begin{remark}[Deal with {\hypersetup{linkcolor=black}\hyperref[q2]{\textbf{Q2}}}]
    \label{Remark 5.} 
It is notable that $\mathbf{G}_t$ is highly correlated with the sequence $(\Delta \mathbf{W}_1,...,\Delta \mathbf{W}_{t})$.  More precisely, adjusting the weight matrix of the $k$-th layer, will invariably impact the norm of $[\Delta \mathbf{W}_{t}]_{t>k}^L$, further influence $[\mathbf{G}_t]_{t>k}^L$. When we calibrate the weight matrix of the $k$-th layer, the span of affected weight updates $\Delta \mathbf{W}_t$ encompasses $L-k+1$ matrices, this indicates that the deeper the layer of the adjusted parameters is, the fewer the number of $\mathbf{G}_t$ affected. It also suggests that tweaks to the deep layers yield a more steadfast influence on the global norm of $[\mathbf{G}_t]_1^L$, thereby exerting a steadier influence over generalization performance. The enhanced stability observed in adjusting deeper layers, as demonstrated in our experimental findings in Section \ref{section1}, can be theoretically explained based on the analysis presented above.
\end{remark}

\begin{remark}[How should \textbf{Theorem 2} be interpreted?] \label{Remark 6.} Expected generalization error (\textbf{Theorem 2}) = population risk ($L_{\mu}$) - empirical risk ($L_{\mathbf{H}_s}$). More specifically, on the one hand, \textbf{Theorem 2} shows that clipping weights controls the F-norm of the implicit gradient $([\Delta \mathbf{W}_t]_1^L/[\mathbf{G}_t]_1^L)$, which helps reduce the expected generalization error. On the other hand, we can evaluate the empirical risk $(L_{\mathbf{H}_s})$ by assessing the model's performance on the validation set. If the generalization error is known, it is possible to estimate the population risk ($L_{\mu}$). Therefore, the most challenging aspect is addressing the generalization error. Furthermore, we will illustrate the interpretation of \textbf{Theorem 2} from the perspectives of pruning methods and the ICL shot numbers.
(i) Adjusting the F-norm of $([\Delta \mathbf{W}_t]_1^L/[\mathbf{G}_t]_1^L)$ could enhance performance when utilizing ICL, suggesting that other weight-based pruning methods may also be effective. For example, magnitude-based pruning \citep{wen2016learning} directly controls the matrix F-norm. Certainly, there are also some layer-based pruning methods (e.g., drop-layer method \citep{gromov2024unreasonable}), we discuss in detail in Appendix \ref{drop_layer}. 
(ii) Reducing the ICL shot numbers from $N$ to $N^{'}$ can indeed lead to more robust outcomes specifically in the “SVD weight pruning” operation, rather than in the model's overall performance. Experimentally, in Section \ref{ep12}, we analyze the effect of different ICL shot numbers. As shown in Figure \ref{figure2}, with a decrease in the number of shots, the rate of performance collapse in the model slows down after a sharp reduction at the shallow layer. Theoretically, $\Delta {\mathbf{W}}(N)-\Delta {\mathbf{W}}(N^{'}) = \left( \sum_{i=N^{'}+1}^N {\mathbf{W}}_V {\mathbf{h}}_i \otimes {\mathbf{W}}_K \mathbf{h}_i \right) \mathbf{W}_Q$, suggesting that the implicit gradient for $N$ is expected to be more sensitive than that for $N^{'}$.
\end{remark}
\subsection{Applications of Our Theoretical Understanding}\label{apply}
In this portion, we deal with {\hypersetup{linkcolor=black}\hyperref[q3]{\textbf{Q3}}}. We further explore the relationship between model performance improvement and SVD, as illustrated by a simple case study.
\begin{case}
    Assuming that $\mathbf{W}$ has two unit eigenvectors $\mathbf{x}_1,\mathbf{x}_2$, based on the properties of eigenvectors:
    \begin{equation*}
\mathbf{Wx_1}=\lambda_1\mathbf{x_1},\mathbf{Wx_2}=\lambda_2\mathbf{x_2},
    \end{equation*}
    and for a vector $\mathbf{b}$ on the hyperplane, it can be decomposed into a linear combination of two eigenvectors:
    \begin{equation*}
\mathbf{b}=l_1\mathbf{x_1}+l_2\mathbf{x_2}=\mathbf{W}\left(\frac{l_1}{\lambda_1}\mathbf{x}_1+\frac{l_2}{\lambda_2}\mathbf{x}_2\right)=\mathbf{Wx},
    \end{equation*}
    where $l_1,l_2$ are coefficients and $\lambda_1,\lambda_2$ are eigenvalues. Notice that if $\lambda_1\gg \lambda_2$, when point $\mathbf{b}$ moves in the direction of $\mathbf{x_1}$, the value of $l_1$ changes but the solution set $\mathbf{x}$ changes insignificantly.  Conversely, if it moves in the direction of $\mathbf{x}_2$, the value of $l_2$ changes and the solution set $\mathbf{x}$ changes dramatically, this is an ill-conditioned\footnote{
Now consider a linear system $\mathbf{Wx}=\mathbf{b}$, where the solution set $\mathbf{x}$ is highly sensitive to the coefficients of $\mathbf{W}$ and $\mathbf{b}$. In such cases, the system of equations is termed ill-conditioned.} problem whose solution is highly biased under small perturbations.
\end{case}

Given that SVD can be considered to play a role in noise reduction in ICL inference, and that this noise can cause significant disturbances to model output. We thus introduce the matrix ill-conditionedness, which can be measured by the Matrix condition number defined as follows.

 \textbf{Matrix condition number\footnote{Matrix condition number is an option, any indicator that can guide the control of norms is a potential option.}.} \label{cond}$Cond(\mathbf{W})=||\mathbf{W}||_p||\mathbf{W}^{-1}||_p$. 
Specifically, the condition number $Cond(\mathbf{W})=\frac{\sigma_{max}}{\sigma_{min}}$ when $p=2$, where $\sigma_{max},\sigma_{min}$
  denote the maximum and minimum singular values respectively. Generally, for matrices of the same order, the higher the condition number of a matrix is, the greater its ill-conditionedness will be.
\begin{figure}[H]
  \centering
    \begin{subfigure}[b]{0.245\textwidth}
        \centering
        \includegraphics[width=\textwidth]{ 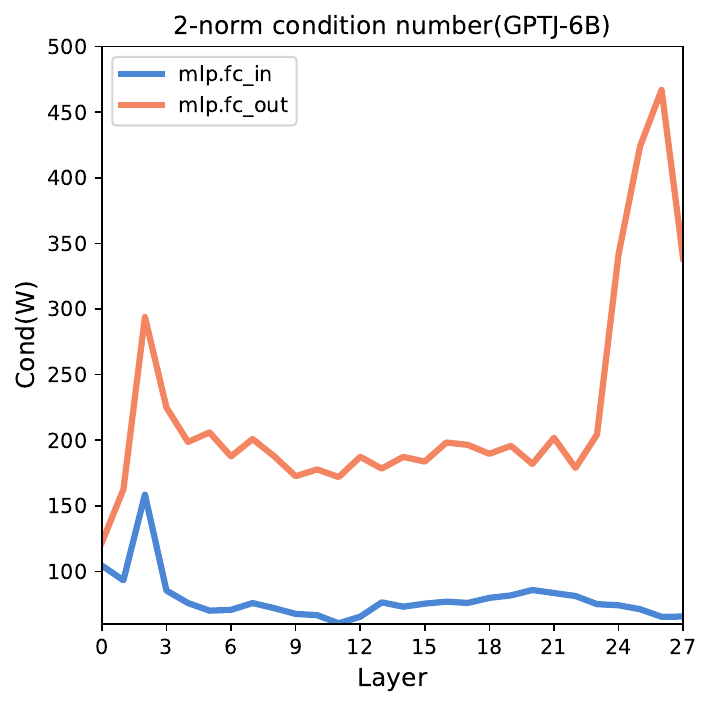}
    \end{subfigure}
    \begin{subfigure}[b]{0.245\textwidth}
        \centering
        \includegraphics[width=\textwidth]{ 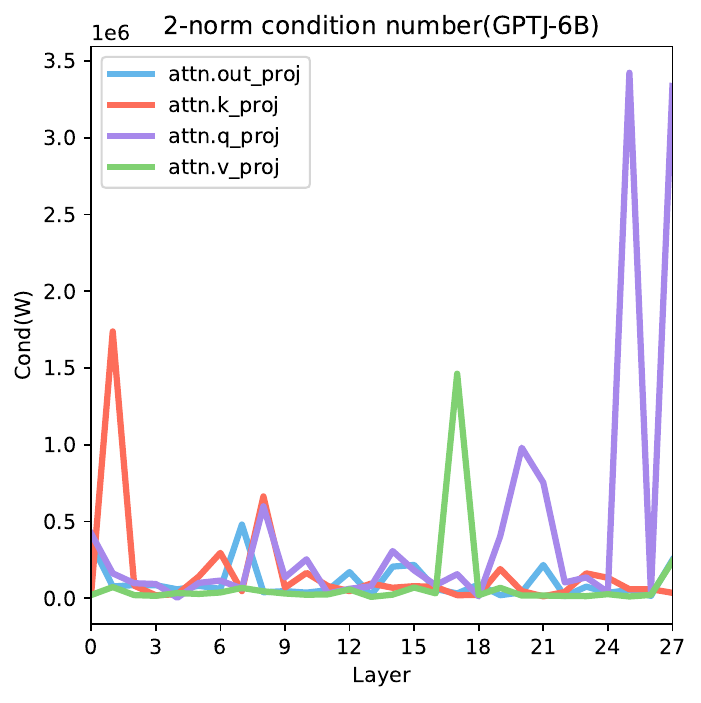}
    \end{subfigure}
    \begin{subfigure}[b]{0.245\textwidth}
        \centering
        \includegraphics[width=\textwidth]{ 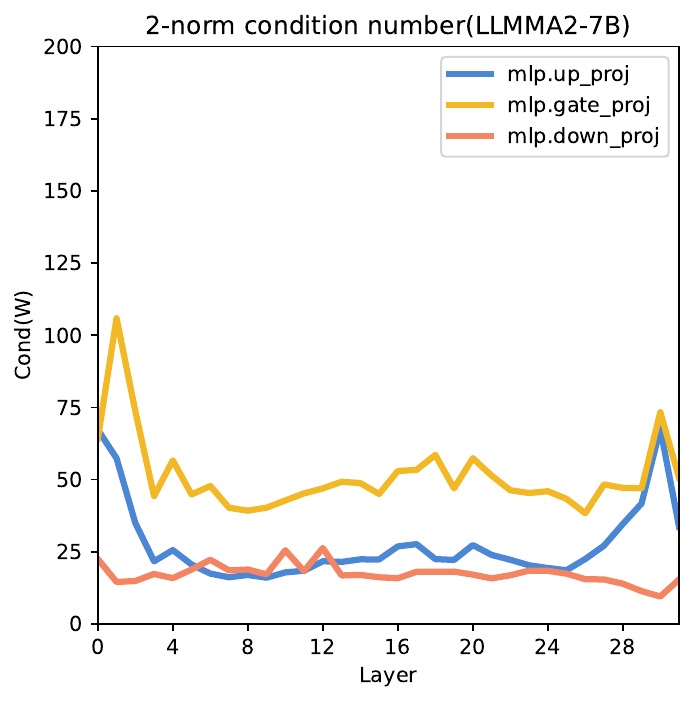}
    \end{subfigure}
    \begin{subfigure}[b]{0.245\textwidth}
        \centering
        \includegraphics[width=\textwidth]{ 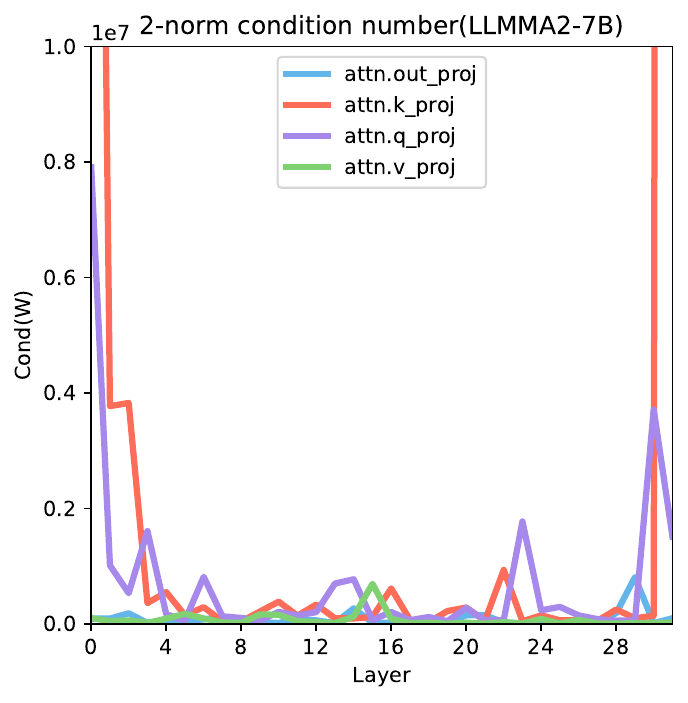}
    \end{subfigure}
  \caption{2-norm condition number of GPTJ-6B\&LLAMA2-7B. The condition numbers for MLP are significantly lower than those for ATTN. In deeper layers, condition numbers tend to be higher. These matrices are ill-conditioned for they satisfy $\sigma_{max}\gg \sigma_{min}$.}
  \label{figure3}
\end{figure}
Then we analyze the matrix condition number of the models as shown in Figure \ref{figure3}. We observe that the condition number of deeper layers (especially the last layers) in the model is generally higher, indicating that the condition number may serve as a reference for adjusting the model.

Based on all our exciting insights, we find it intuitive to design ICL improvements based on them, especially in downstream tasks. We propose a method where we first select layers with the top-k largest condition numbers and then identify the layer with the largest number among these. We perform a greedy search for the optimal clipping rate $\xi^*$ on the validation set and subsequently evaluate it on the test set. This procedure is reported in {\hypersetup{linkcolor=black}\hyperref[al1]{\textbf{Algorithm 1}}} in Appendix \ref{Algorithm1}.

\textbf{Experimental setup.} We conduct experiments for {\hypersetup{linkcolor=black}\hyperref[al1]{\textbf{Algorithm 1}}} on widely adopted benchmark datasets, including SST-2 \citep{sst2}, AGNEWS \citep{agnews}, EMOC \citep{emoc}, MRPC \citep{mrpc}, RTE \citep{rte}, CB \citep{cb}, COPA \citep{copa}.  Please
refer to Appendix \ref{prompts} and \ref{datasetdetails} for more detailed dataset and prompt settings. And for models, we use the same models as in Section \ref{section1}.

\textbf{Experimental results.}
As Figure \ref{figure4} shows, the results indicate that {\hypersetup{linkcolor=black}\hyperref[al1]{\textbf{Algorithm 1}}} can visibly influence performance across different language understanding tasks in ICL inference. Specially, the effectiveness of {\hypersetup{linkcolor=black}\hyperref[al1]{\textbf{Algorithm 1}}} in enhancing performance varies not only by the model but also significantly by the task, indicating a potential task-specific and model-specific threshold for the benefits derived from algorithmic enhancements. More notably, {\hypersetup{linkcolor=black}\hyperref[al1]{\textbf{Algorithm 1}}} is a derivative-free optimization method and compresses the model to some extent, providing developers and researchers with an effective approach for handling downstream tasks. We also invite readers to refer to Appendices \ref{single}, \ref{varies} and \ref{same_rate} for discussions on \textit{What would be the effect of pruning only a single module?}, \textit{Why optimal clipping rate $\xi$ varies?} and \textit{What would happen if we apply the same clipping rate to other datasets}?
\begin{figure}[ht]
        \centering
        \includegraphics[width=\textwidth]{ 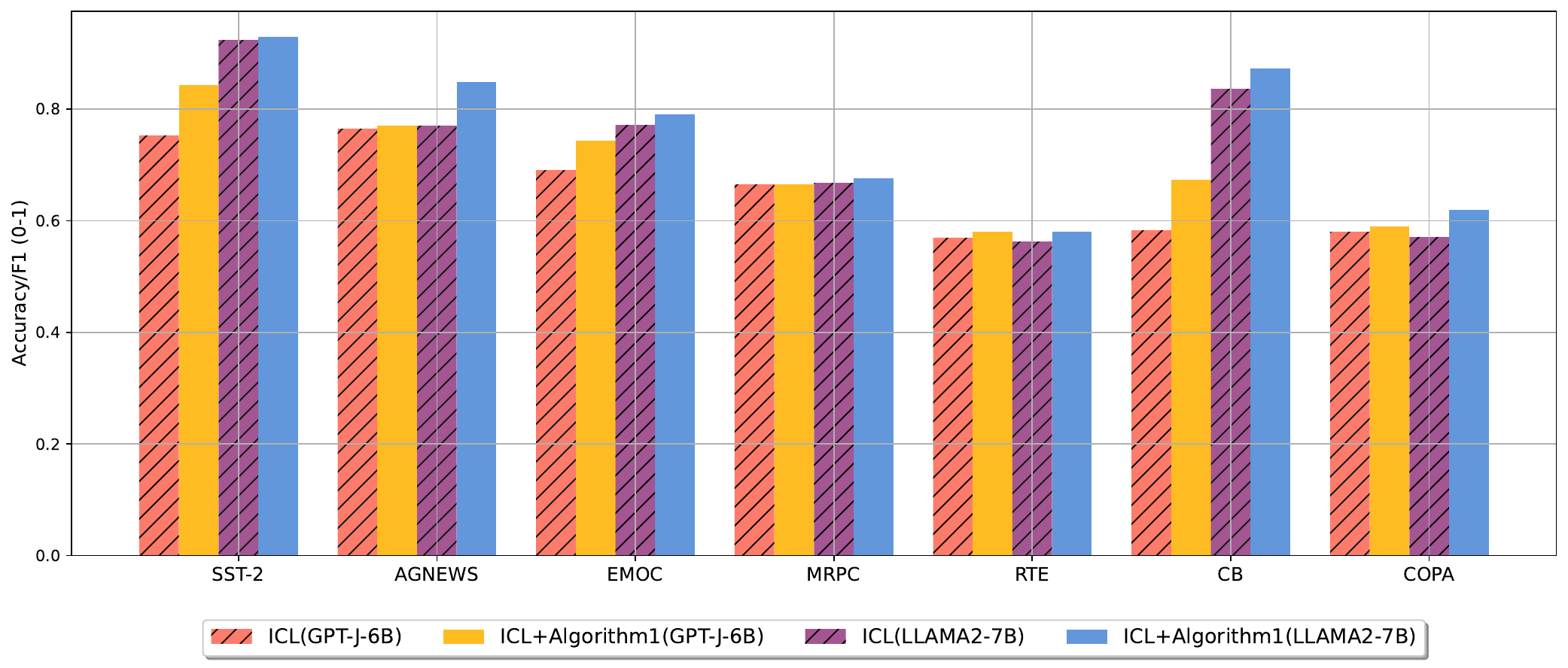}
        \caption{The Model Performance on Test set by different tasks. The results are obtained by comparing four scenarios: ICL (GPT-J-6B), ICL+Algorithm1 (GPT-J-6B), ICL (LLAMA2-7B) and ICL+Algorithm1 (LLAMA2-7B). ICL+Algorithm1 demonstrates superior results over only ICL on different tasks.  See Appendix \ref{result_algorithm} for detailed numbers.}
        \label{figure4}
\end{figure}
\section{Conclusion and Limitation}\label{limit}
In this paper, we show our surprising findings in ICL inference: SVD-based weight pruning can enhance ICL performance both in shallow and deep layers across different module types,  and pruning weights in deep layers often results in more stable performance improvements than in shallow layers. We conduct an in-depth theoretical analysis and explanation of these findings. Specifically, we first present the implicit gradient descent trajectories of ICL, afterwards, we give a mutual information based generalization bounds of ICL via full implicit GD trajectories. Based on all our experimental and theoretical insights, we intuitively propose a derivative-free and effective method for downstream tasks in enhancing ICL inference. However, further studies are required on (i) how to extend our generalization theory to a more standard Transformer architecture, (ii) do the results about ICL hold true for tasks beyond natural language processing and (iii) how to minimize the cost of searching the optimal clipping rate. Those will deepen our understanding of the ICL capabilities. 
\begin{ack}
This research was supported by National Natural Science Foundation of China (No.62476277, No.6207623), Beijing Natural Science Foundation (No.4222029), CCF-ALIMAMA TECH Kangaroo Fund(No.CCF-ALIMAMA OF 2024008), and Huawei-Renmin University joint program on Information Retrieval. We also acknowledge the support provided by the fund for building worldclass universities (disciplines) of Renmin University of China and by the funds from Beijing Key Laboratory of Big Data Management and Analysis Methods, Gaoling School of Artificial Intelligence, Renmin University of China, from Engineering Research Center of Next-Generation Intelligent Search and Recommendation, Ministry of Education, from Intelligent Social Governance Interdisciplinary Platform, Major Innovation \& Planning Interdisciplinary Platform for the “DoubleFirst Class” Initiative, Renmin University of China, from Public Policy and Decision-making Research Lab of Renmin University of China, and from Public Computing Cloud, Renmin University of China.
\end{ack}

\medskip

{
\small

\bibliographystyle{plainnat} 
\bibliography{ref} 
}

\newpage
\appendix
\section{Omitted Proofs and Additional Results}
\subsection{Proof of Lemma 1}\label{pf_lemma1}
\begin{proof}
Given that numerous studies \citep{irie2022dual,dai2023gpt,ren2023incontext} describe ICL with attention can be connected to a meta-optimizer which produces implicit gradient, we  show the  dual form between linear attention and gradient descent. First, consider a very simple linear layer, 
\begin{equation*}
    F(\mathbf{x}) = \mathbf{Wx},
\end{equation*}
where $\mathbf{W} \in \mathbb{R}^{dout \times din}$  is the projection matrix. Given training inputs $[\mathbf{x}_i]_{i=1}^N \in \mathbb{R}^{din}$ with their labels  $[\mathbf{y}_i]_{i=1}^N \in \mathbb{R}^{dout} $ and the loss function $ \mathcal{L} $ with learning rate $ \eta$ , gradient descent process produces the corresponding back-propagation signals $[\mathbf{e}_i]_{i=1}^N \in \mathbb{R}^{dout}$, where $ \mathbf{e}_i = -\eta \left(  \nabla_{\mathbf{y}_i^{'}}\mathcal{L} \right)$ and $\mathbf{y}_i^{'} = \mathbf{Wx}_i$. During test time, we can use a trained linear layer,
\begin{equation}
\hat{F}(\mathbf{x}_{test}) = \hat{\mathbf{W}}\mathbf{x}_{test} = (\mathbf{W} + \Delta \mathbf{W})\mathbf{x}_{test} = \mathbf{Wx}_{test} + \left(\sum_{i=1}^N \mathbf{e}_i \otimes \mathbf{x}_i \right) \mathbf{x}_{test},   \label{a_eq3}
\end{equation}

where $\otimes$ denotes the outer product according to the chain rule of differentiation.
On the other hand, this process can be associated with linear attention,

\begin{equation}
LinearAttn(\mathbf{V}, \mathbf{K}, \mathbf{q}) = \mathbf{VK}^T\mathbf{q} = \sum_{i=1}^N \mathbf{v}_i(\mathbf{k}_i^T\mathbf{q}) = \sum_{i=1}^N (\mathbf{v}_i \otimes \mathbf{k}_i)\mathbf{q},   \label{a_eq4}
\end{equation}   
where $ [\mathbf{k}_i]_1^N$,$[\mathbf{v}_i]_1^N\in \mathbb{R}^{din}$ represent the key and value vectors respectively, forming the key and value matrix $\mathbf{K,V}\in \mathbb{R}^{din\times N}$ in the attention mechanism. Based on Eq.(\ref{a_eq4}), rewrite Eq.(\ref{a_eq3}) as:
\begin{equation}
    \hat{F}(\mathbf{x}_{test}) = \mathbf{Wx}_{test} + \left(\sum_{i=1}^N \mathbf{e}_i \otimes \mathbf{x}_i \right) \mathbf{x}_{test} = \mathbf{Wx}_{test} + LinearAttn(\mathbf{E}, \mathbf{X}, \mathbf{x}_{test}).
\end{equation}
Then let's go back to the ICL process and approximate standard attention Eq.(\ref{eq2}) as a relaxed linear attention mechanism  by removing the Softmax operation and scale factor:
\begin{equation}
 \begin{split}
    F_{icl}(\mathbf{h}_{N+1}) \approx \hat{\mathbf{h}}_{N+1}&=\mathbf{h}_{N+1}+ \mathbf{W}_V\mathbf{H M(W}_K\mathbf{H})^T \mathbf{W}_Q \mathbf{h}_{N+1} \\
    &=\mathbf{h}_{N+1}+ \mathbf{W}_V \mathbf{H}_{s} (\mathbf{W}_K \mathbf{H}_{s})^T \mathbf{W}_Q \mathbf{h}_{N+1}\\
    &=\mathbf{h}_{N+1}+ \mathbf{W}_0 \mathbf{h}_{N+1} + \mathbf{W}_V \mathbf{H}_{s} (\mathbf{W}_K \mathbf{H}_{s})^T \mathbf{W}_Q \mathbf{h}_{N+1}\\
    &= \mathbf{h}_{N+1}+\mathbf{W}_{0} \mathbf{h}_{N+1} + \left( \sum_{i=1}^N \mathbf{W}_V{\mathbf{h}_i} \otimes \mathbf{W}_K{\mathbf{h}_i} \right) \mathbf{W}_Q \mathbf{h}_{N+1}\\
&= \mathbf{h}_{N+1}+\mathbf{W}_{0} \mathbf{h}_{N+1}+ LinearAttn(\mathbf{W}_V \mathbf{H}_{s}, \mathbf{W}_K \mathbf{H}_{s}, \mathbf{W}_Q \mathbf{h}_{N+1})\\
&= \mathbf{h}_{N+1}+\mathbf{W}_{0} \mathbf{h}_{N+1} + \Delta \mathbf{W}_{icl} \mathbf{h}_{N+1} \label{a_eq6}
    \end{split}
\end{equation}
Where $\mathbf{W}_0=0$, and $\mathbf{W}_V\mathbf{H}_s$ is regarded as the meta-gradient of ICL, which is used to generate the implicit gradient matrix $ \Delta \mathbf{W}_{icl}$ to act on the final feature output. 
\end{proof}
\subsection{Softmax Attention and ICL Implicit GD}\label{soft}
Inspired by \citet{ren2023incontext}, by connecting Softmax Attention with Kernels, we can interpret ICL as a gradient descent process in contrast learning pattern. For simplicity, as in Section \ref{icl}, scale factors are removed as follows:
\begin{align}
\begin{split}
    \mathbf{A} = \text{Softmax}((\mathbf{W}_K \mathbf{H})^T \mathbf{W}_Q \mathbf{H})&, \mathbf{A}_u = \exp((\mathbf{W}_K \mathbf{H})^T \mathbf{W}_Q \mathbf{H}),\\
    \mathbf{A} = \mathbf{D}^{-1}\mathbf{A}_u&,\mathbf{D}=diag(\mathbf{A}_u,\mathbf{1}_N).
\end{split}\label{eq_a1}
\end{align}
where $\exp(\cdot)$ is element-wise. Following \citet{DBLP:journals/corr/abs-2009-14794} and \citet{ren2023incontext}, a Softmax kernel function $\mathbf{K}_{\text{Softmax}}:\mathbb{R}^{dout}\times \mathbb{R}^{dout}\rightarrow \mathbb{R}_{+} $ is defined as:
\begin{align}
\begin{split}
   \mathbf{K}_{\text{Softmax}}(\mathbf{x}, \mathbf{y}) = \exp(\mathbf{x}^T \mathbf{y}) = \exp \left( \frac{\|\mathbf{x}\|^2}{2} \right) \mathbf{K}_{gauss}(\mathbf{x}, \mathbf{y}) \exp \left( \frac{\|\mathbf{y}\|^2}{2} \right),
\end{split}
\end{align}
where $\mathbf{K}_{gauss}(\mathbf{x}, \mathbf{y}) = \exp \left( -\frac{\|\mathbf{x} - \mathbf{y}\|^2}{2} \right)$ is the Gaussian kernel with a variance $\sigma^2=1$.  According to Mercer’s theorem \citep{mercer1909functions}, there exists some mapping function $\phi$ satisfying $K_{\text{Softmax}}(\mathbf{x}, \mathbf{y}) = \phi(\mathbf{x})^T \phi(\mathbf{y})$. Thus, rewrite Eq.(\ref{eq_a1}) as:
\begin{align}
    \begin{split}
        \mathbf{A}_u(i,j) &= \exp((\mathbf{W}_K \mathbf{h}_i)^T \mathbf{W}_Q \mathbf{h}_j)\\
        &=\mathbf{K}_{\text{Softmax}}(\mathbf{W}_K \mathbf{h}_i,\mathbf{W}_Q \mathbf{h}_j)\\
        &=\phi(\mathbf{W}_K \mathbf{h}_i)^T\phi(\mathbf{W}_Q \mathbf{h}_j),
    \end{split}
\end{align}
then, connect with Eq.(\ref{eq2}) we have,
\begin{align}
    \begin{split}
\hat{\mathbf{h}}_{N+1} &= \mathbf{h}_{N+1}+\mathbf{W}_V\mathbf{HM}\ \text{Softmax}\left( \frac{((\mathbf{W}_K\mathbf{H})^T  \mathbf{W}_Q \mathbf{h}_{N+1})}{\sqrt{d_{scale}}} \right)\\
&\approx \mathbf{h}_{N+1}+\mathbf{W}_V\mathbf{HM}\ \text{Softmax}\left( ((\mathbf{W}_K\mathbf{H})^T  \mathbf{W}_Q \mathbf{h}_{N+1}) \right)\\
&= \mathbf{h}_{N+1} + \frac{1}{D^{'}}\mathbf{W}_V[\mathbf{H}_s,\mathbf{h}_{N+1}]\mathbf{M}[\phi(\mathbf{W}_K\mathbf{H}_s),\phi(\mathbf{W}_K\mathbf{h}_{N+1})]^T\phi(\mathbf{W}_Q \mathbf{h}_{N+1})\\
&= \mathbf{h}_{N+1} +\frac{1}{D^{'}}\mathbf{W}_V\mathbf{H}_s\phi(\mathbf{W}_K\mathbf{H}_s)^T\phi(\mathbf{W}_Q \mathbf{h}_{N+1})\\
&= \mathbf{h}_{N+1} +\frac{1}{D^{'}} \left[\sum_{i=1}^N (\mathbf{W}_V\mathbf{H}_s)_i \otimes \phi(\mathbf{W}_K\mathbf{H}_s)_i \right]\phi(\mathbf{W}_Q \mathbf{h}_{N+1})\\
&= \mathbf{h}_{N+1} + \Delta \mathbf{W}_{icl}^{'}\phi(\mathbf{W}_Q \mathbf{h}_{N+1}),
    \end{split}
\end{align}
where $D^{'} = \mathbf{1}_{N}\phi(\mathbf{W}_{K}\mathbf{H}_{s})^T \phi(\mathbf{W}_{Q}\mathbf{h}_{N+1}) + \phi(\mathbf{W}_{K}\mathbf{h}_{N+1})^T \phi(\mathbf{W}_{Q}\mathbf{h}_{N+1})$ represents a constant for the normalized attention scores. It can be considered that the mapping function $\phi$, or rather the effect of the Softmax function, is to project the original features into a higher-dimensional space to capture more profound features. Subsequently, learning and meta-optimization are conducted under this new feature space. It is also noteworthy that $\text{rank}(\Delta \mathbf{W}_{icl}^{'})\leq \text{rank}(\mathbf{H}_s)\leq N$.
\subsection{Feed-Forward and ICL Implicit GD}\label{mlp}
The LLMs based on the Transformer architecture not only contain an Attention mechanism, but also include feed-forward (MLP) layers. It can be estimated that the feed-forward layers account for approximately two-thirds of the parameters in a Transformer model \citep{voita-etal-2019-analyzing,vig-belinkov-2019-analyzing}. Thus, MLP layers are essential for LLMs and there has been considerable work hypothesizing and analyzing the model's feed-forward layers. For example, \citet{geva2021transformer,qiu2024empirical} show that feed-forward layers in Transformer based language models operate as key-value memories, \citet{tian2024joma} propose Joint MLP/Attention (JoMA) dynamics  by integrating out the self-attention layers in Transformers, analyzes joint training of MLP and self-attention layers, and qualitatively explains dynamics of multi-layer Transformers. Based on the inspiration above, we take MLP into consideration and rewrite Eq.~(\ref{eq3}) as:
\begin{align*}
\hat{\mathbf{h}}_{N+1} &= \mathbf{h}_{N+1}+\text{MLP}(\mathbf{W}_V\mathbf{HM}(\mathbf{W}_K\mathbf{H})^T  \mathbf{W}_Q \mathbf{h}_{N+1})\\&= \mathbf{h}_{N+1}+\mathbf{W}_{out}\text{relu}(\mathbf{W}_{in}\mathbf{W}_V\mathbf{HM}(\mathbf{W}_K\mathbf{H})^T  \mathbf{W}_Q \mathbf{h}_{N+1}),
\end{align*}
where $\mathbf{W}_V$,$\mathbf{W}_K$,$\mathbf{W}_Q$$\in\mathbb{R}^{(dim1)\times(dout+din)},\mathbf{W}_{in}\in\mathbb{R}^{(dim2)\times(dim1)},\mathbf{W}_{out}\in\mathbb{R}^{(dout+din)\times(dim2)}$ are  projection matrices and $\text{relu}(\cdot)$ is the activation function. And we can further relax the activation function for ease of qualitative analysis:
\begin{align*}
\hat{\mathbf{h}}_{N+1} &= \mathbf{h}_{N+1}+\mathbf{W}_{out}\mathbf{W}_{in}\mathbf{W}_V\mathbf{HM}(\mathbf{W}_K\mathbf{H})^T  \mathbf{W}_Q \mathbf{h}_{N+1}
\\ &= \mathbf{h}_{N+1}+\mathbf{W}_{MLP}\mathbf{W}_V\mathbf{HM}(\mathbf{W}_K\mathbf{H})^T  \mathbf{W}_Q \mathbf{h}_{N+1}
,
\end{align*}
where $\mathbf{W}_{MLP}\in \mathbb{R}^{(dout+din)\times(dim1)}$, which can be seen as a dimensional adaptation. And then do the similar derivation that we do in Appendix \ref{pf_lemma1}, we can get $\Delta\mathbf{W}_{icl}^{''}=\mathbf{W}_{MLP}\left( \sum_{i=1}^N \mathbf{W}_V{\mathbf{h}_i} \otimes \mathbf{W}_K{\mathbf{h}_i} \right) \mathbf{W}_Q$.
\subsection{Proof of Theorem 1}\label{pf_Theorem1}
\begin{proof}
Similar to \ref{pf_lemma1}, Pay attention to the test token when $i=N+1$,
\begin{align*}
\begin{split}
    \mathbf{h}_{N+1}^1&=\mathbf{h}_{N+1}^0+\Delta \mathbf{W}_1\mathbf{h}_{N+1}^0,\\
\mathbf{h}_{N+1}^2&=\mathbf{h}_{N+1}^1+\Delta \mathbf{W}_2\mathbf{h}_{N+1}^1\\
&=\mathbf{h}_{N+1}^0+\Delta \mathbf{W}_1\mathbf{h}_{N+1}^0+\Delta \mathbf{W}_2(1+\Delta \mathbf{W}_1)\mathbf{h}_{N+1}^0,\\
\mathbf{h}_{N+1}^3&=\mathbf{h}_{N+1}^2+\Delta \mathbf{W}_3\mathbf{h}_{N+1}^2\\
&=\mathbf{h}_{N+1}^0+\Delta \mathbf{W}_1\mathbf{h}_{N+1}^0+\Delta \mathbf{W}_2(1+\Delta \mathbf{W}_1)\mathbf{h}_{N+1}^0 \\&+\Delta \mathbf{W}_3\left (1+\Delta \mathbf{W}_1+\Delta \mathbf{W}_2(1+\Delta \mathbf{W}_1)\right)\mathbf{h}_{N+1}^0,\\
&......
\end{split}
\end{align*}
Thus, we give the implicit GD trajectories $[\mathbf{G}_t]_1^L$ of ICL, where $\mathbf{W}_{t}=\mathbf{W}_{t-1}+\mathbf{G}_t$,
\begin{align}
\begin{split}
\mathbf{W}_0&=0,\\
\mathbf{G}_1&=\Delta \mathbf{W}_1,\\
\mathbf{G}_2 &= \Delta \mathbf{W}_2(1+\Delta \mathbf{W}_1)=\Delta \mathbf{W}_2(1+\mathbf{G}_1),\\
\mathbf{G}_3&=\Delta \mathbf{W}_3\left (1+\Delta \mathbf{W}_1+\Delta \mathbf{W}_2(1+\Delta \mathbf{W}_1)\right)=\Delta \mathbf{W}_3(1+\mathbf{G}_1+\mathbf{G}_2),\\
......\\
\mathbf{G}_{L}&=\Delta \mathbf{W}_L(1+\sum_{t=1}^{L-1}\mathbf{G}_t).
\label{a_eq7}
\end{split}
\end{align}
On the one hand, Eq.(\ref{a_eq7}) shows $\mathbf{G}_t=\Delta \mathbf{W}_t(1+\mathbf{W}_{t-1})$, on the other hand,  $\Delta \mathbf{W}_t \triangleq \left( \sum_{i=1}^N \mathbf{W}_V^{t}{\mathbf{h}_i^{t-1}} \otimes \mathbf{W}_K^{t}{\mathbf{h}_i^{t-1}} \right) \mathbf{W}_Q^{t}$, which is only depend on ($[\mathbf{h}_i^0]_1^N,[\mathbf{W}_K^t]_1^L,[\mathbf{W}_Q^t]_1^L,[\mathbf{W}_V^t]_1^L$). So the exclusion of Transformer weight ($[\mathbf{W}_K^t]_1^L,[\mathbf{W}_Q^t]_1^L,[\mathbf{W}_V^t]_1^L$) implies that $\mathbf{G}_t$ is only dependent on $\mathbf{W}_{t-1}$ and $\mathbf{H}_s$.
\end{proof}
    And the prediction can be read from the corresponding position in $L$-th layer output $h_{N+1}^L$ as follows,
    \begin{align*}
        \mathbf{h}_{N+1}^L=
\left(          \begin{array}{c}   * \\  \mathbf{y}_{pred} \\  \end{array}\right)
=\mathbf{h}_{N+1}^0+\mathbf{W}_{L}\mathbf{h}_{N+1}^0=(1+\mathbf{W}_{L}) 
\left(          \begin{array}{c}   \mathbf{x}_{N+1} \\  mask \\  \end{array}\right).
    \end{align*}
\subsection{Proof of Theorem 2}\label{pf_Theorem2}
The origin form of the mutual information based bound is predicated on a sample-specific MI, which quantifies the shared information between the output variable $W$ and the input sample set $\mathbf{H}_s$. The following lemma shows the result:
\begin{lemma}
    (\citet*[Theorem 1.]{xu2017information}).\label{lemma2} Assume the loss $\ell(w, \mathbf{h})$ is R-subGaussian for any $w\in \mathcal{W}$, then 
\begin{equation*}
    \widetilde{\text{error}} \leq \sqrt{\frac{2R^2}{N} I(W; \mathbf{H}_S)}, 
\end{equation*} 
where $I(W; \mathbf{H}_S) = D_{KL}(Q_{W,\mathbf{H}_S} \| Q_W \otimes Q_{\mathbf{H}_S})$ is the mutual information and $D_{KL}$ denotes the $KL$ divergence. 
\end{lemma}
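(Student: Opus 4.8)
The plan is to follow the classical decoupling argument of Xu and Raginsky. First I would introduce the single function $g(w,\mathbf{s}) \triangleq L_{\mu}(w) - L_{\mathbf{s}}(w)$ on $\mathcal{W}\times\mathcal{H}^N$ and record two elementary facts. Under the joint law $P \triangleq Q_{W,\mathbf{H}_s} = \mu^N \otimes Q_{W\mid \mathbf{H}_s}$ we have $\mathbb{E}_P[g(W,\mathbf{H}_s)] = \widetilde{\text{error}}$ by the definition in Section~\ref{error}. Under the product law $\bar P \triangleq Q_W \otimes Q_{\mathbf{H}_s}$ (an independent copy $\bar W\sim Q_W$ of the output paired with a fresh sample $\bar{\mathbf{H}}_s\sim\mu^N$) we instead have $\mathbb{E}_{\bar P}[g(\bar W,\bar{\mathbf{H}}_s)] = 0$, because for every fixed $w$, $\mathbb{E}_{\bar{\mathbf{H}}_s\sim\mu^N}[L_{\bar{\mathbf{H}}_s}(w)] = \tfrac1N\sum_{i=1}^N \mathbb{E}_{\mathbf{h}_i\sim\mu}[\ell(w,\mathbf{h}_i)] = L_{\mu}(w)$. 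Hence the generalization error equals the discrepancy $\mathbb{E}_P[g] - \mathbb{E}_{\bar P}[g]$, and the problem reduces to bounding this discrepancy by $D_{\mathrm{KL}}(P\|\bar P) = I(W;\mathbf{H}_s)$.

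The main tool is a change-of-measure inequality coming from the Donsker--Varadhan variational formula for KL divergence: if $f$ is $\sigma$-subGaussian under $\bar P$, i.e. $\log\mathbb{E}_{\bar P}\bigl[e^{\lambda(f-\mathbb{E}_{\bar P}f)}\bigr] \le \lambda^2\sigma^2/2$ for all $\lambda\in\mathbb{R}$, then for every $\lambda>0$,
\[
\lambda\bigl(\mathbb{E}_P[f]-\mathbb{E}_{\bar P}[f]\bigr)
\;\le\; D_{\mathrm{KL}}(P\|\bar P) + \log\mathbb{E}_{\bar P}\bigl[e^{\lambda(f-\mathbb{E}_{\bar P}f)}\bigr]
\;\le\; D_{\mathrm{KL}}(P\|\bar P) + \tfrac{\lambda^2\sigma^2}{2},
\]
so that, optimizing the free parameter $\lambda = \sqrt{2D_{\mathrm{KL}}(P\|\bar P)/\sigma^2}$, one gets $\mathbb{E}_P[f]-\mathbb{E}_{\bar P}[f] \le \sqrt{2\sigma^2\,D_{\mathrm{KL}}(P\|\bar P)}$ (and symmetrically for $-f$, hence a two-sided bound). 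It remains to verify the subGaussian hypothesis for $f=g$ under $\bar P$ with $\sigma = R/\sqrt{N}$: fixing $w$, under $\bar{\mathbf{H}}_s = (\mathbf{h}_1,\dots,\mathbf{h}_N)$ i.i.d.\ from $\mu$ we may write $g(w,\bar{\mathbf{H}}_s) = \tfrac1N\sum_{i=1}^N\bigl(L_{\mu}(w)-\ell(w,\mathbf{h}_i)\bigr)$, a sum of $N$ independent, zero-mean, $(R/N)$-subGaussian summands, hence $(R/\sqrt{N})$-subGaussian; integrating the resulting moment-generating-function estimate over the independent $\bar W\sim Q_W$ preserves it by Fubini. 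Combining with $D_{\mathrm{KL}}(P\|\bar P)=I(W;\mathbf{H}_s)$ gives $\widetilde{\text{error}} = \mathbb{E}_P[g]-\mathbb{E}_{\bar P}[g] \le \sqrt{\tfrac{2R^2}{N}I(W;\mathbf{H}_s)}$, as claimed.

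The only genuinely delicate bookkeeping is keeping straight which measure each expectation and each subGaussian estimate refers to: the centering $\mathbb{E}_{\bar P}[g]=0$ and the $R/\sqrt{N}$-subGaussianity of $g$ must both be established under the product law $\bar P$, not under the joint law $P$ — under $P$ the sample and output are dependent and neither statement holds in general. Beyond that, the Donsker--Varadhan inequality, the scaling of subGaussian parameters under averaging of independent terms, and the interchange of $\log\mathbb{E}$ with the outer integration over $\bar W$ are all standard, so I expect no substantive obstacle once the two laws are carefully distinguished.
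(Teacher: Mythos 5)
Your proposal is correct: it is the classical Donsker--Varadhan decoupling argument of Xu and Raginsky, with the centering and the $R/\sqrt{N}$-subGaussianity correctly established under the product law $Q_W\otimes\mu^N$ before optimizing $\lambda$. The paper itself does not reprove this statement but imports it directly as Theorem~1 of \citet{xu2017information}, and your argument is precisely the proof given in that source, so there is nothing further to reconcile.
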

Unroll the terminal parameters’ mutual information $I(W; \mathbf{H}_S)$  to the full trajectories’ mutual information will get:
\begin{lemma}
    (\citet*[Lemma 4.]{wang2022facets}).\label{lemma3} Let {\hypersetup{linkcolor=black}\hyperref[assume1]{\textbf{Assumption 1}}} hold, then $I(W_L; \mathbf{H}_S)\leq \sum_{t=1}^LI(-G_t + C_t^{1/2}  N_t; \mathbf{H}_S|W_{t-1})$. Let $-G_t + C_t^{1/2}N_t\triangleq \mathcal{G}_t$. 
\end{lemma}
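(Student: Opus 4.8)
The plan is to exploit the essentially Markovian structure of the implicit GD recursion in Eq.~(\ref{eq8}). Writing $\mathcal{G}_t \triangleq -G_t + C_t^{1/2} N_t$ as in the statement, Assumption~1 gives the update $W_t = W_{t-1} + \eta\,\mathcal{G}_t$, so that $W_t$ is a fixed deterministic function of the pair $(W_{t-1},\mathcal{G}_t)$ alone: the learning rate $\eta$ is a constant, and the update step injects no further dependence on $\mathbf{H}_S$. This structural observation is what drives the entire argument, so I would establish it first directly from Eq.~(\ref{eq8}).

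Next I would apply the data-processing inequality to this deterministic map. Since $\mathbf{H}_S \to (W_{t-1},\mathcal{G}_t) \to W_t$ is a Markov chain (the last link being deterministic), we get $I(W_t;\mathbf{H}_S) \leq I\bigl((W_{t-1},\mathcal{G}_t);\mathbf{H}_S\bigr)$. I would then expand the right-hand side with the chain rule for mutual information, $I\bigl((W_{t-1},\mathcal{G}_t);\mathbf{H}_S\bigr) = I(W_{t-1};\mathbf{H}_S) + I(\mathcal{G}_t;\mathbf{H}_S\mid W_{t-1})$, which yields the one-step recursion $I(W_t;\mathbf{H}_S) \leq I(W_{t-1};\mathbf{H}_S) + I(\mathcal{G}_t;\mathbf{H}_S\mid W_{t-1})$. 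Note that $\mathcal{G}_t$ may itself depend on $W_{t-1}$ (through the implicit gradients evaluated at the current iterate), but this is harmless: the conditional mutual information $I(\mathcal{G}_t;\mathbf{H}_S\mid W_{t-1})$ already accounts for it.

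Then I would unroll this recursion from $t=1$ to $t=L$, telescoping the $I(W_{t-1};\mathbf{H}_S)$ terms, to obtain $I(W_L;\mathbf{H}_S) \leq I(W_0;\mathbf{H}_S) + \sum_{t=1}^{L} I(\mathcal{G}_t;\mathbf{H}_S\mid W_{t-1})$. Finally, Theorem~1 fixes $W_0 = 0$ deterministically — hence independent of $\mathbf{H}_S$ — so the base term vanishes, $I(W_0;\mathbf{H}_S) = 0$, and the claimed inequality follows. This reproduces, in the ICL setting, the unrolling argument of \citet{wang2022facets}.

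The main obstacle is not any single hard inequality but the measure-theoretic bookkeeping needed to make the two information-theoretic moves legitimate: one must verify that the relevant (conditional) mutual informations are well-defined and finite under Assumption~1 and the $R$-subGaussian loss, and that the fresh noise $N_t \sim \mathcal{N}(0,I_d)$ is drawn independently of $(W_{t-1},\mathbf{H}_S)$, so that $\mathcal{G}_t$ introduces no coupling to $\mathbf{H}_S$ beyond what conditioning on $W_{t-1}$ already captures. Once the deterministic-update form of Eq.~(\ref{eq8}) is in hand, the DPI step, the chain-rule expansion, and the telescoping are all routine.
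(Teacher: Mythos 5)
Your argument is correct and is essentially the paper's own proof: the paper likewise bounds $I(W_L;\mathbf{H}_S)$ via the data-processing inequality applied to the sum $W_{t-1}+\eta\,\mathcal{G}_t$ (a deterministic function of the pair $(W_{t-1},\mathcal{G}_t)$), expands with the chain rule to get $I(W_{t-1};\mathbf{H}_S)+I(\mathcal{G}_t;\mathbf{H}_S\mid W_{t-1})$, telescopes over $t$, and kills the base term using $I(W_0;\mathbf{H}_S)=0$ since $W_0=0$. The only cosmetic difference is how $\eta$ disappears — the paper invokes scale-invariance of mutual information while you absorb it into the deterministic update map — which is immaterial.
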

\begin{proof}
    Recall {\hypersetup{linkcolor=black}\hyperref[assume1]{\textbf{Assumption 1}}} Eq.(\ref{eq8}), \citet*{wang2022facets} get,
\begin{align*}
    I(W_L; \mathbf{H}_S)&=I(W_{L-1}+\eta(-G_L + C_L^{1/2}N_L); \mathbf{H}_S)\\
&\leq I(W_L,\eta(-G_L + C_L^{1/2}N_L); \mathbf{H}_S)  \tag{14}\\  
&=I(W_{L-1}; \mathbf{H}_S)+I(\eta(-G_L + C_L^{1/2}N_L); \mathbf{H}_S|W_{L-1})\tag{15}\\
&=I(W_{L-1}; \mathbf{H}_S)+I(-G_L + C_L^{1/2}N_L; \mathbf{H}_S|W_{L-1})\\
&\leq \sum_{t=1}^LI(-G_t + C_t^{1/2}N_t; \mathbf{H}_S|W_{t-1})+I(W_0;\mathbf{H}_S)\\
&=\sum_{t=1}^LI(-G_t + C_t^{1/2}N_t; \mathbf{H}_S|W_{t-1}).
\end{align*}
where Eq.~(14) is by the data processing inequality (e.g., $Z - (X, Y) - (X + Y)$ form a Markov chain then $I(X + Y, Z) \leq I(X, Y; Z)$), Eq.~(15) is by the chain rule of the mutual information, and learning rate $\eta$ is dropped since mutual information is scale-invariant.
 $I(W_0;\mathbf{H}_S)=0$ for $W_0$ is independent of all other random variables in {\hypersetup{linkcolor=black}\hyperref[Theorem 1]{\textbf{Theorem 1}}} and {\hypersetup{linkcolor=black}\hyperref[assume1]{\textbf{Assumption 1}}}.
\end{proof}
Besides, we present the variational representation of mutual information:
\begin{lemma}
    (\citet[Corollary 3.1.]{polyanskiy2019lecture}).\label{lemma4} For two random variables $X$ and $Y$, we have $I(X; Y) = \inf_{P} \mathbb{E}_{X} [D_{KL}(Q_{Y|X} || P)]$, where the infimum is achieved at $P = Q_Y$. 
\end{lemma}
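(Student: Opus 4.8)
The plan is to prove the single identity
\[
\mathbb{E}_{X}\!\left[D_{KL}\!\left(Q_{Y|X}\,\|\,P\right)\right] \;=\; I(X;Y) \;+\; D_{KL}\!\left(Q_Y\,\|\,P\right)
\]
for an arbitrary probability measure $P$ on the range of $Y$, and then to read off the corollary directly: because $D_{KL}(Q_Y\|P)\ge 0$ with equality if and only if $P = Q_Y$, the right-hand side is minimized over $P$ at $P = Q_Y$, where it equals $I(X;Y)$; hence $\inf_{P}\mathbb{E}_X[D_{KL}(Q_{Y|X}\|P)] = I(X;Y)$ and the infimum is attained at $P = Q_Y$.

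First I would handle absolute continuity. If $Q_Y$ is not absolutely continuous with respect to $P$, then $Q_{Y|X=x}$ fails to be absolutely continuous with respect to $P$ for $x$ in a set of positive $Q_X$-probability, so the left-hand side is $+\infty$; and the right-hand side is also $+\infty$ since $D_{KL}(Q_Y\|P) = +\infty$ in this case. So it suffices to treat $Q_Y \ll P$, on which (and hence $Q_{Y|X=x}$-almost everywhere, for $Q_X$-a.e.\ $x$) the Radon--Nikodym chain rule gives
\[
\frac{dQ_{Y|X=x}}{dP} \;=\; \frac{dQ_{Y|X=x}}{dQ_Y}\cdot\frac{dQ_Y}{dP}.
\]

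Taking logarithms and integrating against $Q_{Y|X=x}$ splits the conditional divergence as $D_{KL}(Q_{Y|X=x}\|P) = D_{KL}(Q_{Y|X=x}\|Q_Y) + \mathbb{E}_{Y\sim Q_{Y|X=x}}[\log (dQ_Y/dP)]$. Then I would take the outer expectation over $X\sim Q_X$: the first term integrates to $I(X;Y)$ by the definition of mutual information, and in the second term the integrand is a function of $Y$ only, so averaging $X$ out replaces $Q_{Y|X}$ with the marginal $Q_Y$, producing exactly $D_{KL}(Q_Y\|P)$. The additive split under the integral is justified by a standard truncation argument, or immediately under the finiteness assumptions $I(X;Y)<\infty$ and $D_{KL}(Q_Y\|P)<\infty$, which is the only case of interest. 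This proves the identity, and the corollary follows as described.

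The main obstacle here is entirely measure-theoretic bookkeeping rather than analysis: checking that the Radon--Nikodym chain rule is invoked on the correct null sets, that the interchange of logarithm, expectation and the additive decomposition does not conceal an $\infty - \infty$, and that the non-absolutely-continuous case is dealt with consistently on both sides. No inequality needs to be estimated and no auxiliary object needs to be constructed; once the decomposition $\mathbb{E}_X[D_{KL}(Q_{Y|X}\|P)] = I(X;Y) + D_{KL}(Q_Y\|P)$ is established, the non-negativity of the KL divergence and its equality condition deliver both the value of the infimum and its minimizer at once.
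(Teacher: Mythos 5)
Your proposal is correct: the decomposition $\mathbb{E}_X[D_{KL}(Q_{Y|X}\|P)] = I(X;Y) + D_{KL}(Q_Y\|P)$ plus nonnegativity of the KL divergence is exactly the standard ("golden formula") proof of this variational representation, and your treatment of the absolute-continuity and infinite cases is adequate. Note that the paper itself offers no proof of this lemma — it is imported verbatim from \citet[Corollary 3.1]{polyanskiy2019lecture} — so your argument simply reproduces the proof from the cited source; there is no divergence of approach to report.
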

\begin{lemma}
    (\citet*[Lemma 5.]{wang2022facets}).\label{lemma5} At every time step  $t $, let $P_{\tilde{N_t}|W_{t-1}} $be any distribution satisfying $D_{KL}(P_{\mathcal{G}_t|W_{t-1}} \| P_{\tilde{N_t}|W_{t-1}}) < \infty $, we have $I(\mathcal{G}_t; \mathbf{H}_s|W_{t-1})=\mathbb{E}_{W_{t-1}}\left[ \inf_{P_{\tilde{N_t}|W_{t-1}}} \mathbb{E}_{\mathbf{H}_S}^{W_{t-1}} \left[ D_{KL}(Q_{\mathcal{G}_t|\mathbf{H}_s,W_{t-1}} \| P_{\tilde{N_t}|W_{t-1}}) \right] \right]$, where the infimum is achieved when $P_{\tilde{N_t}|W_{t-1}} = Q_{\mathcal{G}_t|W_{t-1}} $. The KL divergence may be viewed as an estimate of the sensitivity of the full batch implicit gradient to a specific demonstration sample $\mathbf{H}_s = H_s$. 
\end{lemma}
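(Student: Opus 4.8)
Here is a plan for establishing Lemma~5. The statement is exactly the conditional (slicewise) version of the Donsker--Varadhan / golden-formula identity recorded in Lemma~4, so the plan is to freeze $W_{t-1}$ at a value, apply Lemma~4 on that slice, and then integrate over $W_{t-1}$, using the definition of conditional mutual information as an average of pointwise conditional informations.

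First I would fix a value $w_{t-1}$ in the support of $Q_{W_{t-1}}$ and pass to the regular conditional distributions of the pair $(\mathcal{G}_t,\mathbf{H}_s)$ given $W_{t-1}=w_{t-1}$; denote by $Q_{\mathcal{G}_t|\mathbf{H}_s,W_{t-1}=w_{t-1}}$, $Q_{\mathcal{G}_t|W_{t-1}=w_{t-1}}$, $Q_{\mathbf{H}_s|W_{t-1}=w_{t-1}}$ the induced conditional kernel and marginals (the object the lemma writes as $P_{\mathcal{G}_t|W_{t-1}}$ is the conditional marginal law $Q_{\mathcal{G}_t|W_{t-1}}$). The key observation is that the inner objective $\mathbb{E}_{\mathbf{H}_s}^{w_{t-1}}\big[D_{KL}(Q_{\mathcal{G}_t|\mathbf{H}_s,W_{t-1}=w_{t-1}}\,\|\,P_{\tilde N_t|W_{t-1}=w_{t-1}})\big]$ depends on the kernel $P_{\tilde N_t|W_{t-1}}$ only through its slice at $w_{t-1}$, and every probability measure $\nu$ on $\mathbb{R}^d$ with $D_{KL}(Q_{\mathcal{G}_t|W_{t-1}=w_{t-1}}\|\nu)<\infty$ arises as such a slice of some admissible kernel (extend $\nu$ by setting the other slices equal to $Q_{\mathcal{G}_t|W_{t-1}=w'}$, which have zero KL). Hence the infimum at $w_{t-1}$ equals the infimum over all $\nu$ in that feasible set; since $Q_{\mathcal{G}_t|W_{t-1}=w_{t-1}}$ itself lies in this set and is exactly where the unrestricted infimum of Lemma~4 is attained, the feasibility restriction removes nothing, and Lemma~4 applied to $(\mathcal{G}_t,\mathbf{H}_s)$ under the law $Q_{\cdot\,|\,W_{t-1}=w_{t-1}}$ yields
\[
\inf_{\nu}\;\mathbb{E}_{\mathbf{H}_s}^{w_{t-1}}\Big[D_{KL}\big(Q_{\mathcal{G}_t|\mathbf{H}_s,W_{t-1}=w_{t-1}}\,\big\|\,\nu\big)\Big]\;=\;I\big(\mathcal{G}_t;\mathbf{H}_s\,\big|\,W_{t-1}=w_{t-1}\big),
\]
with the value attained at $\nu=Q_{\mathcal{G}_t|W_{t-1}=w_{t-1}}$.

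Next I would integrate both sides against $Q_{W_{t-1}}$. By the slicewise reduction just noted, the integrand on the left is, for $Q_{W_{t-1}}$-a.e.\ $w_{t-1}$, exactly the expression $\inf_{P_{\tilde N_t|W_{t-1}}}\mathbb{E}_{\mathbf{H}_s}^{W_{t-1}}[D_{KL}(Q_{\mathcal{G}_t|\mathbf{H}_s,W_{t-1}}\|P_{\tilde N_t|W_{t-1}})]$ appearing inside $\mathbb{E}_{W_{t-1}}[\cdot]$ in Lemma~5, so its integral is precisely the right-hand side of Lemma~5; and the integral of $I(\mathcal{G}_t;\mathbf{H}_s|W_{t-1}=w_{t-1})$ against $Q_{W_{t-1}}$ is, by definition, $I(\mathcal{G}_t;\mathbf{H}_s|W_{t-1})$. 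This gives the identity. The attainment claim transfers verbatim: taking the kernel $P_{\tilde N_t|W_{t-1}}$ whose slice at each $w_{t-1}$ is $Q_{\mathcal{G}_t|W_{t-1}=w_{t-1}}$ — i.e.\ $P_{\tilde N_t|W_{t-1}}=Q_{\mathcal{G}_t|W_{t-1}}$ — turns every inner infimum into an equality, hence the outer one as well.

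The main obstacle is purely measure-theoretic bookkeeping rather than anything conceptual: I need regular conditional probabilities of $(\mathcal{G}_t,\mathbf{H}_s)$ given $W_{t-1}$ to exist (this holds because all spaces in play are standard Borel — $\mathcal{G}_t,W_{t-1}\in\mathbb{R}^d$ and $\mathbf{H}_s$ lives in a Polish instance space), and I need $w_{t-1}\mapsto I(\mathcal{G}_t;\mathbf{H}_s|W_{t-1}=w_{t-1})$, equivalently the slicewise infimum, to be measurable so the outer integration is legitimate. This measurability can be obtained from a measurable-selection/monotone-class argument using joint measurability of the KL functional; alternatively one can sidestep it entirely by proving the two inequalities separately — ``$\le$'' from plugging in the explicit measurable kernel $w_{t-1}\mapsto Q_{\mathcal{G}_t|W_{t-1}=w_{t-1}}$ (which makes the left side equal $\mathbb{E}_{W_{t-1}}[0]$ added to the chain-rule decomposition, yielding $I(\mathcal{G}_t;\mathbf{H}_s|W_{t-1})$), and ``$\ge$'' from the pointwise lower bound of Lemma~4 on each slice followed by Tonelli's theorem for the nonnegative integrand. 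Everything else is a direct invocation of Lemma~4.
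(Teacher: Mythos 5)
Your proposal is correct and follows the same route the paper intends: the paper states the variational representation (its Lemma 4) immediately before this lemma precisely so that the conditional version follows by freezing $W_{t-1}$, applying the representation on each slice, and averaging over $W_{t-1}$, with the infimum attained at $Q_{\mathcal{G}_t|W_{t-1}}$ — exactly your argument. The paper itself gives no proof, simply citing Lemma 5 of \citet{wang2022facets}, whose proof is the slicewise/conditioning argument you describe, so your additional care about regular conditional probabilities and measurability is fine but not something the paper elaborates.
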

From \textbf{Lemma 5}, every choice of $P_{\tilde{N_t}|W_{t-1}}$ gives rise to an upper bound of the MI of interest via $I(\mathcal{G}_t; \mathbf{H}_s|W_{t-1})\leq \mathbb{E}_{W_{t-1}}\left[  \mathbb{E}_{\mathbf{H}_s}^{W_{t-1}} \left[ D_{KL}(Q_{\mathcal{G}_t|\mathbf{H}_s,W_{t-1}} \| P_{\tilde{N_t}|W_{t-1}}) \right] \right]$. Same to \citet*{wang2022facets}, choose an isotropic Gaussian prior $P_{\tilde{N_t}|W_{t-1}}=  \mathcal{N}(\tilde{g_t}, \sigma_t^2I_d)$,  where both $\tilde{g_t}$ and $\sigma_t$ are only allowed to depend on $W_{t-1}$ under {\hypersetup{linkcolor=black}\hyperref[Theorem 1]{\textbf{Theorem 1}}}, and optimize the KL divergence in \textbf{Lemma 5} over $\sigma_t$  for a fixed $\tilde{g_t}$ . 
Additionally, Under {\hypersetup{linkcolor=black}\hyperref[Theorem 1]{\textbf{Theorem 1}}} where we define the implicit GD trajectories of ICL, assume $C_t$ is a positive-definite matrix, for any $t\in[L]$, we have,
 \begin{align*}
     &I\left( -G_t + C_t^{1/2} N_t; \mathbf{H}_s|W_{t-1} = w_{t-1} \right)\\ &\leq \inf_{\tilde{g}_t, \sigma_t} \mathbb{E}_{\mathbf{H}_s} \left[ D_{KL} \left( P_{-G_t + C_t^{1/2} N_t | W_{t-1} = w_{t-1}, \mathbf{H}_s=H_s} \| P_{-\tilde{g}_t + \sigma_t N_t | W_{t-1} = w_{t-1}} \right) \right]\\
&= \inf_{\tilde{g}_t, \sigma_t} \mathbb{E}_{\mathbf{H}_s} \left[ \frac{1}{2} \log \left( \frac{\det(\sigma_t^2 I_d)}{\det(C_t)} \right)-\frac{1}{2}d + \frac{1}{2 \sigma_t^2} (G_t - \tilde{g}_t)^TI_d^{-1}(G_t - \tilde{g}_t)  + \frac{1}{2 \sigma_t^2} \mathrm{tr} \left\{ I_d^{-1} C_t \right\} \right] \tag{16} \\
&= \frac{1}{2}\inf_{\tilde{g}_t, \sigma_t} \mathbb{E}_{\mathbf{H}_s} \left[ \frac{1}{ \sigma_t^2} (\left\| G_t - \tilde{g}_t \right\|_2^2 + \mathrm{tr} \left\{ C_t \right\}) + d \log \sigma_t^2 - d - \mathrm{tr} \left\{ \log C_t \right\} \right] \tag{17}
 \end{align*}
 where Eq.~(16) is by the KL divergence between two Gaussian distributions:
 $D_{KL}(p\|q) = \frac{1}{2} \left[\log \frac{\det(\Sigma_q)}{\det(\Sigma_p)} - k + (\mu_p - \mu_q)^T \Sigma^{-1}_q (\mu_p - \mu_q) + \mathrm{tr}(\Sigma^{-1}_q \Sigma_p)\right],$ and Eq.~(17) is due to the fact that $G_t^TG_t=\mathrm{tr}(G_tG_t^T)$ and $\log \det(C_t)=\mathrm{tr}(\log C_t)$ when $C_t$ is a positive-definite matrix.\\
 Let $A_1(t)=\mathbb{E}_{\mathbf{H}_s}[\left\| G_t - \tilde{g}_t \right\|_2^2 + \mathrm{tr} \left\{ C_t \right\}]$ and $A_2(t)=\mathbb{E}_{\mathbf{H}_s}[\mathrm{tr} \left\{ \log C_t \right\} ]$ and fix $\tilde{g}_t$ can rewrite Eq.~(17) as,
 \begin{align*}
     &\frac{1}{2}\inf_{\sigma_t\geq 0}\frac{1}{\sigma_t^2}A_1(t)+d\log \sigma_t^2-d-A_2(t)\\
     &=\frac{1}{2}d\log\frac{A_1(t)}{d}-\frac{1}{2}A_2(t),
 \end{align*}
 where the optimal $\sigma^*=\sqrt{\frac{A_1(t)}{d}}$ when we take the derivative of $\sigma_t$. Then combine {\hypersetup{linkcolor=black}\hyperref[lemma2]{\textbf{Lemma 2}} } and {\hypersetup{linkcolor=black}\hyperref[lemma3]{\textbf{Lemma 3}} }, we can finish the proof of the following lemma.
\begin{lemma}
     Under the conditions of {\hypersetup{linkcolor=black}\hyperref[Theorem 1]{\textbf{Theorem 1}}}, assume the implicit  gradient noise covariance $C_t$ is a positive-define matrix, the loss $\ell(w, \mathbf{h})$ is R-subGaussian for any $w\in \mathcal{W} \in \mathbb{R}^d$. For any $t \in [L]$, let $ \tilde{g_t}$  be any constant vector for a given $w_{t-1}$, then 
\begin{align*}
    \widetilde{\text{error}} &\leq \sqrt{\frac{R^2}{N} \sum_{t=1}^{L} \mathbb{E}_{W_{t-1}} \left[ d\log (A_1(t)/d) - A_2(t) \right]} \\
    &= \sqrt{\frac{R^2}{N} \sum_{t=1}^{L} \mathbb{E}^{\mathbf{H}_s}_{\mathbf{W}_{t-1}} \left[ d\log \left(\frac{\left\lVert\mathbf{vec}(\Delta \mathbf{W}_t(1+\sum_{j=1}^{t-1}\mathbf{G}_j)) - \tilde{g_t}\right\rVert_2^2+\mathrm{tr}\{C_t\}}{d}\right) - \mathrm{tr}\{\log C_t\} \right]},
\end{align*}  
where  $A_1(t) = \mathbb{E}^{\mathbf{H}_s}_{W_{t-1}} \left[ \left\lVert G_t - \tilde{g_t}\right\rVert_2^2 + \mathrm{tr}\{C_t\} \right] , A_2(t) = \mathbb{E}^{\mathbf{H}_s}_{W_{t-1}} \left[ \mathrm{tr}\{\log C_t\} \right] $, $\mathbf{vec(G}_t)=G_t$,
$\mathrm{tr}\{\cdot\}$ denotes the trace of a matrix and  $\mathbb{E}^X_Y$  is the conditional expectation.
\end{lemma}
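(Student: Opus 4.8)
The plan is to chain Lemmas 2--5 together with one explicit Gaussian computation and one scalar optimization, mirroring the variational argument of \citet{wang2022facets} but applied to the ICL implicit trajectory $\{\mathbf{G}_t\}_1^L$ of Theorem 1. First I would invoke Lemma 2 to reduce the claim to an upper bound on $I(W_L;\mathbf{H}_S)$, since the $L$-layer Transformer outputs $W=W_L$. Then Lemma 3 (which uses Assumption 1, and the fact from Theorem 1 that $W_0=0$ is independent of everything, so $I(W_0;\mathbf{H}_S)=0$) gives $I(W_L;\mathbf{H}_S)\le\sum_{t=1}^{L}I(\mathcal{G}_t;\mathbf{H}_S\mid W_{t-1})$ with $\mathcal{G}_t=-G_t+C_t^{1/2}N_t$. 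Next, Lemma 5 (the conditional form of Lemma 4) says that for \emph{any} prior $P_{\tilde N_t\mid W_{t-1}}$ one has $I(\mathcal{G}_t;\mathbf{H}_s\mid W_{t-1})\le\mathbb{E}_{W_{t-1}}\mathbb{E}^{\mathbf{H}_s}_{W_{t-1}}[D_{KL}(Q_{\mathcal{G}_t\mid\mathbf{H}_s,W_{t-1}}\,\|\,P_{\tilde N_t\mid W_{t-1}})]$; the key structural input is that, conditioned on $(W_{t-1},\mathbf{H}_s)$, the implicit gradient $G_t=\mathbf{vec}(\Delta\mathbf{W}_t(1+\mathbf{W}_{t-1}))$ is a deterministic vector, so $Q_{\mathcal{G}_t\mid\mathbf{H}_s,W_{t-1}}=\mathcal{N}(-G_t,C_t)$ exactly.

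With these reductions in place, I would take the isotropic Gaussian prior $P_{\tilde N_t\mid W_{t-1}}=\mathcal{N}(-\tilde g_t,\sigma_t^2 I_d)$, where $\tilde g_t$ and $\sigma_t$ are allowed to depend only on $W_{t-1}$, and evaluate the KL divergence between the two Gaussians via the standard closed form. Using $\|v\|_2^2=\mathrm{tr}(vv^T)$ and, since $C_t\succ0$, the identity $\log\det C_t=\mathrm{tr}\log C_t$, this is precisely Eqs.~(16)--(17): the per-step bound becomes $\tfrac12\mathbb{E}^{\mathbf{H}_s}_{W_{t-1}}[\sigma_t^{-2}(\|G_t-\tilde g_t\|_2^2+\mathrm{tr}\{C_t\})+d\log\sigma_t^2-d-\mathrm{tr}\{\log C_t\}]$. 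I would then fix $\tilde g_t$, push $\mathbb{E}^{\mathbf{H}_s}_{W_{t-1}}$ through (valid since $\sigma_t$ cannot see $\mathbf{H}_s$), set $A_1(t)=\mathbb{E}^{\mathbf{H}_s}_{W_{t-1}}[\|G_t-\tilde g_t\|_2^2+\mathrm{tr}\{C_t\}]$ and $A_2(t)=\mathbb{E}^{\mathbf{H}_s}_{W_{t-1}}[\mathrm{tr}\{\log C_t\}]$, and minimize $\sigma_t^{-2}A_1(t)+d\log\sigma_t^2-d-A_2(t)$ over $\sigma_t>0$: the first-order condition yields $\sigma_t^{*2}=A_1(t)/d$ and optimal value $\tfrac12(d\log(A_1(t)/d)-A_2(t))$.

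Finally I would sum over $t\in[L]$, take the outer expectation $\mathbb{E}_{W_{t-1}}$, and substitute back into Lemma 2; the factor $2$ in Lemma 2 and the $\tfrac12$ from the KL formula cancel, producing the $R^2/N$ prefactor. Replacing $G_t$ by its explicit form $\mathbf{vec}(\mathbf{G}_t)=\mathbf{vec}(\Delta\mathbf{W}_t(1+\mathbf{W}_0+\sum_{j=1}^{t-1}\mathbf{G}_j))$ from Theorem 1 gives exactly the stated bound, valid for every admissible choice of $\tilde g_t$.

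\textbf{Main obstacle.} The scalar optimization and Gaussian algebra are routine; the delicate points are (i) justifying, when applying Lemma 5, that the channel $\mathbf{H}_s\mapsto\mathcal{G}_t$ is genuinely ``deterministic mean plus $\mathcal{N}(0,C_t)$ noise'' --- i.e., that $G_t$ is measurable with respect to $(W_{t-1},\mathbf{H}_s)$ alone, which is exactly where the ``exclusion of Transformer weights'' observation after Theorem 1 is essential --- and (ii) keeping the conditioning and order of operations straight, so that the infimum over $(\tilde g_t,\sigma_t)$ is taken \emph{outside} the $\mathbf{H}_s$-expectation but \emph{inside} the $W_{t-1}$-expectation; this is what lets the inequality hold uniformly over all constant vectors $\tilde g_t$ (and, downstream, lets one choose $\tilde g_t=0$ and apply Frobenius submultiplicativity to recover Theorem 2).
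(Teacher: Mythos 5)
Your proposal matches the paper's proof step for step: Lemma~2 to reduce to $I(W_L;\mathbf{H}_S)$, Lemma~3 to unroll along the trajectory, Lemma~5 with an isotropic Gaussian prior, the closed-form Gaussian KL together with $\|v\|_2^2=\mathrm{tr}(vv^T)$ and $\log\det C_t=\mathrm{tr}\log C_t$, and finally the scalar optimization $\sigma_t^{*2}=A_1(t)/d$ whose factor $\tfrac12$ cancels the $2$ in Lemma~2. Your extra care about the measurability of $G_t$ with respect to $(W_{t-1},\mathbf{H}_s)$ and about where the infimum over $(\tilde g_t,\sigma_t)$ sits relative to the two expectations is exactly the justification the paper leaves implicit, so this is the same argument, soundly executed.
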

Further, if we let $\tilde{g_t}=0$, reverse the process of flattening the weight matrix into a vector form, and by Eq.~(19) we obtain
\begin{align*}
    \left\lVert\mathbf{vec}(\Delta \mathbf{W}_t(1+\sum_{j=1}^{t-1}\mathbf{G}_j))) - \tilde{g_t}\right\rVert_2^2+ \mathrm{tr}\{C_t\} &= \left\lVert\Delta \mathbf{W}_t(1+\sum_{j=1}^{t-1}\mathbf{G}_j)\right\rVert_F^2+ \mathrm{tr}\{C_t\}\\
    &\leq \left\lVert\Delta \mathbf{W}_t\right\rVert_F^2\cdot\left\lVert1+\sum_{j=1}^{t-1}\mathbf{G}_j\right\rVert_F^2+\mathrm{tr}\{C_t\}.
\end{align*}
Then plug everything into \textbf{Lemma 6}, we  conclude the proof.
\subsection{Examples of Remark 4}\label{example Remark 4}
  First, we will present several expressions that will be utilized later. For any matrix $\mathbf{A}$ and $\mathbf{B}$, we have,
  \begin{align*}
      &\left\lVert\mathbf{vec(A)}\otimes \mathbf{vec(B)}\right\rVert_F = \sqrt{\sum_i\sum_j|\mathbf{vec(A)}_i\mathbf{vec(B)}_j|^2}=\max_i|\sigma_{i}(\mathbf{vec(A)}\otimes \mathbf{vec(B)})| \tag{18}\\
      &\left\lVert\mathbf{AB}\right\rVert_F\leq \left\lVert\mathbf{A}\right\rVert_F\left\lVert\mathbf{B}\right\rVert_F \tag{19}\\
      &\left\lVert\mathbf{A+B}\right\rVert_F\leq\left\lVert\mathbf{A}\right\rVert_F+\left\lVert\mathbf{B}\right\rVert_F\tag{20}
  \end{align*}
 Same in {\hypersetup{linkcolor=black}\hyperref[lemma1]{\textbf{Lemma 1}}},  $\Delta \mathbf{W}_{icl} = \left( \sum_{i=1}^N \mathbf{W}_V{\mathbf{h}_i} \otimes \mathbf{W}_K{\mathbf{h}_i} \right) \mathbf{W}_Q$, let $r$ represents the remained rank  and $\delta$ represents the potential noise consisting of parts with small singular values.  Some SVD controlling the norm of $\Delta \mathbf{W}_{icl}$ examples are as follows.
 \begin{example}[Prune $\mathbf{W}_Q$]
     Suppose we decompose $\mathbf{W}_Q$ by SVD, $\mathbf{W}_{Q}=\mathbf{W}_{Q_r}+\delta_Q =\mathbf{U}_{:r}^Q\mathbf{\Sigma}_{:r}^Q(\mathbf{V}_{:r}^Q)^T+\delta_Q$, let $\mathbf{z}\triangleq \left( \sum_{i=1}^N \mathbf{W}_V{\mathbf{h}_i} \otimes \mathbf{W}_K{\mathbf{h}_i} \right)$ for simplicity.
  \begin{align*}
      \left\lVert\Delta \mathbf{W}_{icl}\right\rVert_F &= \left\lVert\mathbf{zU}_{:r}^Q\mathbf{\Sigma}_{:r}^Q(\mathbf{V}_{:r}^Q)^T+\mathbf{z}\delta_Q \right\rVert_F \\
&\leq \left\lVert\mathbf{zW}_{Q_r}\right\rVert_F+\left\lVert\mathbf{z}\delta_Q\right\rVert_F,
  \end{align*}
  where the inequality part takes advantage of the Eq.~(20) , so the upper bound on $\left\lVert\Delta \mathbf{W}_{icl}\right\rVert_F$ is decreasing when using SVD.
 \end{example}
  \begin{example}[Prune $\mathbf{W}_V$ or $\mathbf{W}_K$]
 Suppose we decompose $\mathbf{W}_V$ by SVD,  $\mathbf{W}_{V}=\mathbf{W}_{V_r}+\delta_V =\mathbf{U}_{:r}^V\mathbf{\Sigma}_{:r}^V(\mathbf{V}_{:r}^V)^T+\delta_V$, and consider the upper bound defined in {\hypersetup{linkcolor=black}\hyperref[example1]{\textbf{Example 1}}},
  \begin{align*}
   UB(\left\lVert\Delta \mathbf{W}\right\rVert_{F}^2)&=\sum_{i=1}^N\left\lVert\mathbf{W}_{V}{\mathbf{h}_i} \otimes \mathbf{W}_K{\mathbf{h}_i}\right\rVert_F^2\left\lVert\mathbf{W}_{Q}\right\rVert_F^2\\&\geq    \sum_{i=1}^N\left\lVert\mathbf{W}_{V_r}{\mathbf{h}_i} \otimes \mathbf{W}_K{\mathbf{h}_i}\right\rVert_F^2\left\lVert\mathbf{W}_{Q}\right\rVert_F^2\tag{21},
  \end{align*}
  where Eq.~(21) is by Eq.~(18), so the upper bound on $\left\lVert\Delta \mathbf{W}_{icl}\right\rVert_F$ is also decreasing. The same is true for decomposing $\mathbf{W}_K$.
  \end{example}
\begin{example}[Prune $\mathbf{W}_{MLP}$]
Suppose we decompose $\mathbf{W}_{MLP}$ by SVD,  $\mathbf{W}_{MLP}=\mathbf{W}_{MLP_{r}}+\delta_{MLP} =\mathbf{U}_{:r}\mathbf{\Sigma}_{:r}\mathbf{V}_{:r}^T+\delta_{MLP}$, let $\mathbf{z}\triangleq \left( \sum_{i=1}^N \mathbf{W}_V{\mathbf{h}_i} \otimes \mathbf{W}_K{\mathbf{h}_i} \right)\mathbf{W}_Q$ for simplicity.
  \begin{align*}
      \left\lVert\Delta \mathbf{W}_{icl}^{''}\right\rVert_F &= \left\lVert\mathbf{U}_{:r}\mathbf{\Sigma}_{:r}\mathbf{V}_{:r}^T\mathbf{z}+\delta_{MLP}\mathbf{z} \right\rVert_F \\
&\leq \left\lVert\mathbf{W}_{MLP_r}\mathbf{z}\right\rVert_F+\left\lVert\delta_{MLP}\mathbf{z}\right\rVert_F,
  \end{align*}
  where recall Appendix \ref{mlp}, $\Delta\mathbf{W}_{icl}^{''}=\mathbf{W}_{MLP}\left( \sum_{i=1}^N\mathbf{W}_V{\mathbf{h}_i} \otimes \mathbf{W}_K{\mathbf{h}_i} \right) \mathbf{W}_Q$. And the inequality part takes advantage of the Eq.~(20) , so the upper bound on $\left\lVert\Delta \mathbf{W}_{icl}^{''}\right\rVert_F$ is decreasing when using SVD.
 \end{example}
 \section{More Discussions}
 \subsection{Explanation of the Mask Matrix}\label{mask_m}
 Our notation here follows the related work \citep{ahn2023transformers}, which explains: Note that the prompt is asymmetric since the label for $\mathbf{x}_{N+1}$ is excluded from the input. To reflect this asymmetric structure, the mask matrix $\mathbf{M}$ is included in the attention. More specifically, if you pay attention to the $(N+1)$-th item, $\mathbf{M}$ is supposed to represent a causal mask
(For $\mathbf{H}\in \mathbb{R}^{(dout+din)\times(N+1)}$, $\mathbf{HM}=(\mathbf{h}_1,...,\mathbf{h}_N,\mathbf{h}_{N+1})\mathbf{M}=(\mathbf{h}_1,...,\mathbf{h}_N,0)=\mathbf{H}_s$). 
Besides, this mask method is used in GLM training \citep{du2022glmgenerallanguagemodel}: Part A tokens can attend to each other, but cannot attend to any tokens in B. Part B tokens can attend to Part A and antecedents in B, but cannot attend to any subsequent tokens in B. So it is reasonable for Eq.(\ref{eq1}) and Eq.(\ref{eq2}) use the same mask.
 
\subsection{Definitions of Deep and Shallow}\label{def_deep_}
Indeed, there is no universally accepted definition for the terms “deep” and “shallow” as their interpretation can be subjective and dependent on the reference frame (e.g., model size). Intuitively, in this paper, our definition is similar to that of work \citep{wang2023label}: “shallow” layers refer to those closer to the input, while “deep” layers are closer to the output. In Figure \ref{figure1} and Figure \ref{figure2} (Section \ref{section1}), "shallow" typically denotes the first few layers, and "deep" denotes the last few layers of the network.

\subsection{The Effect of Pruning Only a Single Module}\label{single}
In our theory, the example (pruning only $\mathbf{W}_K$ or $\mathbf{W}_V$) is provided in Appendix \ref{example Remark 4}, demonstrating how weight pruning can affect the norm of $[\Delta \mathbf{W}_t]_1^L$/$[\mathbf{G}_t]_t^L$. Thus, it may confer advantages on the performance of Transformers in ICL inference. However, in {\hypersetup{linkcolor=black}\hyperref[Theorem 2]{\textbf{Theorem 2}}} (\textbf{Remark 5}), it also suggests that modifications to the shallow layers have a less steady impact. Additionally, please review the supplementary experimental results provided below. (We choose key projection matrix $\mathbf{W}_K$ and select the 3-layer (large matrix condition number in Figure \ref{figure3}) of GPT-J-6B, others are the same to Section \ref{ep11}).
\begin{table}[H]
  \centering
  \begin{tabular}{ccc}
    \toprule
    Task Name &   Optimal $\xi^*$ & Test Accuracy/F1 Improve\\
    \midrule
   SST-2 &   0.0 &0.7828 - 0.7828   \\
    RTE  &   0.5 &0.5413 - 0.5413    \\
    COPA &  0.995 &0.53 - 0.54   \\
    \bottomrule
  \end{tabular}
\end{table}
\newpage
\subsection{Why Optimal Clipping Rate Varies?}\label{varies}
As we mentioned in Section \ref{bound} and Section \ref{noise}, the implicit gradients produced by Transformers in practical applications are noisy due to factors such as the extent of model pre-training and data characteristics (e.g., ICL shot number/task difficulty). Therefore, $[\Delta \mathbf{W}_t]_1^L$/$[\mathbf{G}_t]_t^L$ in {\hypersetup{linkcolor=black}\hyperref[Theorem 1]{\textbf{Theorem 1}}} have varies noise. That is why optimal $\xi$ varies.

\textbf{Gradient quality derived from context (i.e., $\mathbf{G}_t$).}\label{Gradient quality}
\begin{itemize}
    \item In {\hypersetup{linkcolor=black}\hyperref[Theorem 1]{\textbf{Theorem 1}}} (Remark 2), $\mathbf{G}_t$ is only dependent on $\mathbf{W}_{t-1}$ and $\mathbf{H}_s$, this is consistent with gradient descent in terms of relevance (Conventionally, gradients in training are only related to the current parameters and the training samples).
    \item In real-world training scenarios, SGD computes gradient by selecting a small batch of samples per iteration. This approach approximates the true gradient, inherently introducing noise. Similarly, in In-Context Learning, a small subset of samples (context examples) is used to generate implicit gradient (i.e., $\mathbf{G}_t$), which also results in the introduction of noise.
\end{itemize}
\subsection{Apply the Same Clipping Rate to Other Datasets}\label{same_rate}
As we mentioned in Appendix \ref{Gradient quality}, $[\Delta \mathbf{W}_t]_1^L$/$[\mathbf{G}_t]_t^L$ in {\hypersetup{linkcolor=black}\hyperref[Theorem 1]{\textbf{Theorem 1}}} exhibit varying levels of noise, causing the optimal clipping rate to vary among different tasks, as it is dependent on the specific task and data. So if we apply the clipping rate (0.95) as used in SST-2 to other datasets, the model performance can either improve or deteriorate. Additionally, It is possible to conduct a certain number of experiments to find a range of optimal clipping rate that is broadly applicable.
\subsection{What Would Happen if the Layer Was Dropped Entirely?}\label{drop_layer}
Dropping the layer could be the best option specifically for optimizing generalization error. Here's a detailed analysis:\\
(1) In our theoretical framework, we model each layer of the Transformer do a single iteration of implicit gradient descent (ICL) in {\hypersetup{linkcolor=black}\hyperref[Theorem 1]{\textbf{Theorem 1}}}. This scientific analysis references \citep{vonoswald2023transformers,akyürek2023learning,dai2023gpt}.\\
(2) In {\hypersetup{linkcolor=black}\hyperref[Theorem 2]{\textbf{Theorem 2}}}, L-layer Transformer:

Expected generalization error = population risk ($L_{\mu}$) - empirical risk ($L_{\mathbf{H}_s}$) 

Expected generalization error = $\sqrt{\frac{R^2}{N}\sum_{t=1}^Ld\log (\left\lVert\Delta \mathbf{W}_t\right\rVert_F^2 \left\lVert \mathbf{G}_t\right\rVert_F^2)}$ (only show the main part). If you drop the entire layer, it will change from $\sum_{t=1}^L$ to $\sum_{t=1}^{L-1}$. Therefore, dropping the layer may in fact be the best option for generalization error.

(3) However, according to the traditional statistical learning viewpoint, performance can be bounded by the sum of optimization error and generalization error (Please see {\hypersetup{linkcolor=black}\hyperref[Remark 6.]{\textbf{Remark 6}}} for “How should \textbf{Theorem 2} be interpreted?”).

Thus, during the pruning process, there is a trade-off between optimization and generalization. Therefore, as pruning increases, the model's performance tends to first improve and then decline (the drop-layer method \citep{gromov2024unreasonable} does not harm but also does not improve), as demonstrated in the experiments shown in Figure \ref{figure1}.

In conclusion, dropping the entire layer can be a potential method (best option for generalization error) in our theoretical framework, but it may not necessarily be the best option for model performance.
\section{Extension to Experiments}
\textbf{Computational resources.} We use a single NVIDIA GeForce RTX 3090 GPU and most tests run take 1-5 hours depending on dataset size and context length.
\subsection{Prompts}\label{prompts}
\begin{table}[H]
    \caption{The prompts of the datasets used in our experiments. Here regard the types of tasks:
classification, multiple-choice as (cls.), (mch.). <> represents input from the dataset. For (mch.) tasks, we put in different candidates in the prompt and calculate the average log-likelihood of each candidate, and select the candidate with the highest score.}
  \label{table0}
  \centering
  \begin{tabular}{ccp{9cm}}
    \toprule
    \textbf{Dataset} & \textbf{Type} & \textbf{Prompt} \\
    \midrule
    SST-2 & (cls.) & text: <text> sentiment: <label>  \\
    AGNEWS & (cls.) & text: <text> classification: <label>  \\
    EMOC & (cls.) & text: <text> sentiment: <label>  \\
    MRPC & (cls.) & sentence1:  <sentence1> sentence2: <sentence2> label : <label>  \\
    RTE & (cls.) & <premise> Does this mean that <hypothesis> is true? select Yes or No? <label>  \\
    CB & (cls.) & Suppose <premise> Can we infer that <hypothesis>? Yes, No, or Maybe? <label>  \\
    COPA & (mch.) & <premise><question><choice>  \\
    \bottomrule
  \end{tabular}
\end{table}
\subsection{Algorithm1}\label{Algorithm1}
\begin{algorithm}[H]
\caption{Search the Optimal Clipping Rate for Downstream Tasks}
\begin{algorithmic}
\State \textbf{Require:} Pretrained model $\mathcal{M}$, dataset $\mathcal{D}$, predefined clipping rate candidate set $\mathcal{C}$, \\predefined module $\mathcal{O}\in [\text{ATTN, MLP}]$
\State Split $\mathcal{D}$ into ICL demonstration sample set $\mathcal{H}_s$, validation set $\mathcal{V}$, and test set $\mathcal{T}$
\State Compute condition numbers for all layers in $\mathcal{M}$
\State Select layers $\mathcal{L}$ with top-$k$ largest condition numbers given $\mathcal{O}$
\State Select the largest layer number $l$ from $\mathcal{L}$
\State Initialize $\xi^* = 0$
\State Initialize $score^*= 0$
\For{each $\xi$ in $\mathcal{C}$}
    \State $\mathcal{M}^{'}=Clip(\mathcal{M},l,\xi,\mathcal{O})$
    \State $score = Evaluate(\mathcal{M}^{'}, \mathcal{V},\mathcal{H}_s)$
    \If{$score$ > $score^*$}
        \State $\xi^* = \xi$
        \State $score^* = score$
    \EndIf
\EndFor
\State $\mathcal{M}^{*}=Clip(\mathcal{M},l,\xi^*,\mathcal{O})$
\State $test\ score = Evaluate(\mathcal{M}^{*}, \mathcal{T},\mathcal{H}_s)$
\State Output $test\ score$ on $\mathcal{T}$
\end{algorithmic}\label{al1}
\end{algorithm}
\textbf{The details.} Firstly, the details of the algorithm can be reviewed in the code provided. Specifically, the set of clipping rate candidates is predefined. In our study, the clipping rate candidates are set as shown in Figures \ref{figure1} and \ref{figure2}: [0, 0.1, 0.5, 0.75, 0.9, 0.95, 0.99, 0.995]. Besides, we analyze the impact of different hyperparameters through comparative experiments (details in Section \ref{section1} and Section \ref{apply}), as detailed below.
(1) Clipping rate $\xi$:We search for the optimal $\xi$ in the predefined clipping rate candidates.
(2) Predefined module $\mathcal{O}$: The module containing the target pruning weights, which can be chosen from $[k\_proj, q\_proj, v\_proj, out\_proj, fc\_in, fc\_out, all, mlp, attn]$.
(3) Selected layer $l$ : The layer containing the target pruning weights. For examlpe: In Section \ref{section1}, we mainly focus on comparing the impact of weight pruning to the \textbf{first two and the last two layers} of the model.
(4) ICL shot number : The demonstration number in ICL, we analyze the effect of different ICL shot numbers in Section \ref{ep12}.
\newpage
\subsection{Dataset Details}\label{datasetdetails}
For experiments, we consider numerous classification datasets, including: SST-2 \citep{sst2}, AGNEWS \citep{agnews}, EMOC \citep{emoc}, MRPC \citep{mrpc}, RTE \citep{rte}, CB \citep{cb}. We also include a multiple-choice dataset COPA \citep{copa}.
 
\textbf{SST-2.} SST-2 focuses on sentiment analysis, specifically the task of determining the sentiment of movie reviews. The dataset consists of sentences labeled as having a positive or negative sentiment. These sentences are divided into a training set, a validation set, and a test set. The training set includes about 67,349 sentences, whereas the validation set contains around 872 sentences. The test set comprises approximately 1,821 sentences without labels.  We random sample $N$(10) data from the train set as demonstration sample, 8,000 data from the train set as validation set to search the optimal clipping rate, the rest 59,000+ data as test set.

\textbf{AGNews.} The AGNews dataset is a collection designed for text classification tasks, specifically for news categorization. It consists of news articles gathered from the AG's corpus of news on the web, which is categorized into four main topics: World, Sports, Business, and Science/Technology. In terms of structure, the AGNews dataset comprises approximately 120,000 training samples and 7,600 test samples. We random sample $N$(8) data from the train set as demonstration sample, 8000 data from the train set as validation set to search the optimal clipping rate, the 7,600 test samples as test set.

\textbf{EmoC.} The EmoC dataset focuses on contextual emotion detection in text. It consists of short text conversations extracted from three-turn English Tweets. The dataset is annotated with four emotion labels: happiness, sadness, anger, and others, we relabel the dataset with happiness and others. The training set includes 30,160 samples, while the test set comprises 5,509 samples.  We random sample $N$(10) data from the train set as demonstration sample, 5,000 data from the train set as validation set to search the optimal clipping rate, the 5,509 test samples as test set.

\textbf{MRPC.} The MRPC dataset focuses on paraphrase detection in text. It consists of sentence pairs automatically extracted from online news sources. The dataset is annotated with binary labels indicating whether the sentences are paraphrases of each other. The training set includes 3,668 pairs, the validation set includes 408 pairs, while the test set comprises about 1,725 pairs. We random sample $N$(10) data from the train set as demonstration sample, the 408 pairs from validation set to search the optimal clipping rate, the 1,725 test samples as test set.

\textbf{RTE.} Recognizing Textual Entailment (RTE) task, which involves assessing the relationship between a pair of sentences—a premise and a hypothesis. The objective of the task is to determine whether the hypothesis can be logically inferred from the premise. Specifically, the model must evaluate whether the relationship between the two sentences is one of "entailment," where the content of the hypothesis is directly or indirectly supported by the premise, or "non-entailment," which includes both contradiction and neutrality. In RTE from SuperGLUE \citep{superglue}, the training set includes about 2,490 items, whereas the validation set contains around 277 items.  We random sample $N$(10) data from the train set as demonstration sample, the remain items from train set to search the optimal clipping rate, the 277 validation samples as test set.

\textbf{CB.} The CommitmentBank (CB) dataset is part of the SuperGLUE benchmark and focuses on textual entailment with an emphasis on pragmatic inference. The task involves determining whether a hypothesis can be logically inferred from a given text, which in this dataset, typically comprises a premise followed by a hypothesis. The training set includes about 250 items, whereas the validation set contains around 56 items.  We random sample $N$(15) data from the train set as demonstration sample, the remain 235 items from train set to search the optimal clipping rate, the 56 validation samples as test set.

\textbf{COPA.} The Choice of Plausible Alternatives (COPA) dataset is specifically designed to evaluate causal reasoning abilities in natural language processing models. This dataset presents a task where models must determine causal relationships within simple scenarios. Each question in COPA consists of a premise and two possible choices, one of which is the correct cause or effect of the premise. COPA's dataset is relatively straightforward and consists of 500 questions split evenly into training (400) and validation (100) sets. We random sample $N$(10) data from the train set as demonstration sample, the 200 items from train set to search the optimal clipping rate, the 100 validation samples as test set.
\subsection{Noise Discussion of the Implicit Gradient}\label{noise}
\begin{figure}[H]
  \centering
       \begin{subfigure}[b]{0.45\textwidth}
        \centering
        \includegraphics[width=\textwidth]{ 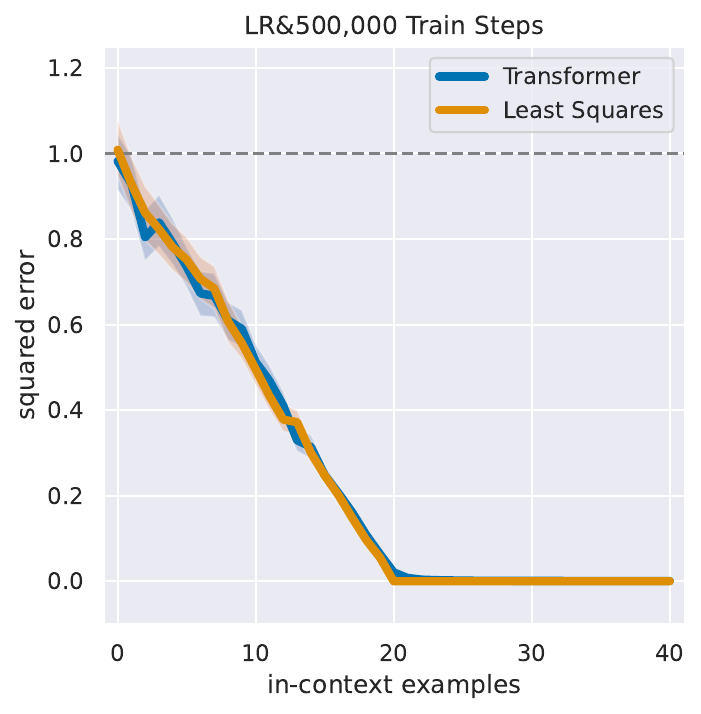}
        \caption{Trained Transformer performs comparably to the optimal least squares estimator}
        \label{figure5a}
    \end{subfigure}
       \begin{subfigure}[b]{0.45\textwidth}
        \centering
        \includegraphics[width=\textwidth]{ 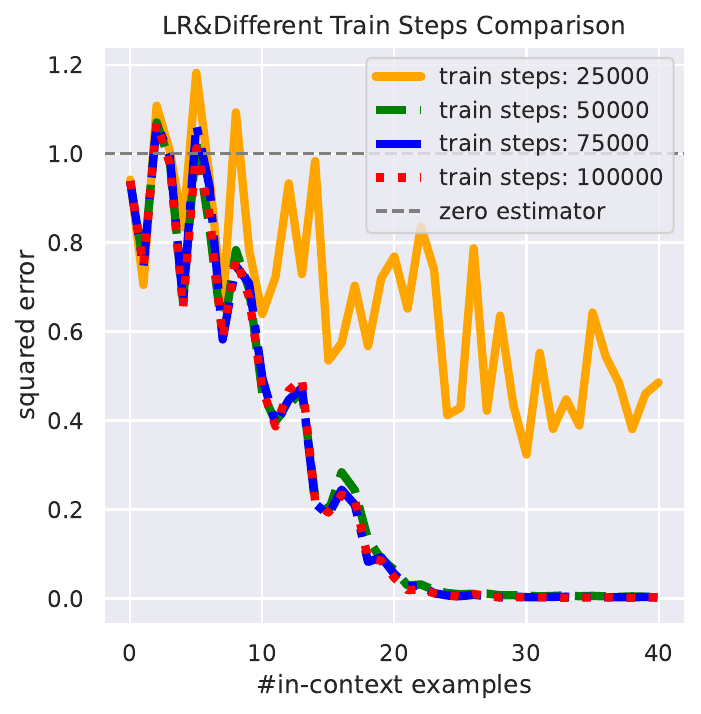}
        \caption{The effect of different train steps on Trained Transformer}
        \label{figure5b}
    \end{subfigure}
  \caption{Evaluating the trained Transformer on in-context learning linear functions. (a) \citet{garg2023transformers} consider the class of linear functions \(\mathcal{F} = \{f | f(x) = w^T x, w \in \mathbb{R}^d\}\), in \(d\) dimensions where \(d = 20\). They sample \(x_1, \ldots, x_k, x_{\text{query}},\) and \(w\) independently from the isotropic Gaussian distribution \(N(0, I_d)\). They then compute each \(y_i = w^T x_i\) and construct the prompt as \(P = (x_1, y_1, x_2, y_2, \ldots, x_k, y_k, x_{\text{query}})\). This figure plots the normalized squared error of the Transformer $((M(P)-w^Tx_{\text{query}})^2/d)$, the errors are normalized so that the trivial zero
estimator achieves an error of 1 (dashed line). Besides, when the number of in-context examples reaches the problem dimension d (here 20), least squares achieves 0 error while the Transformer achieves an error of 0.02. (b) We follow the same setting of \citep{garg2023transformers} in (a) to compare the Trained Transformer with different train steps.}
\end{figure}
As mentioned in  \citet{garg2023transformers}'s work, the trained model is able to learn unseen linear functions from in-context examples with performance comparable to the optimal least squares estimator, this can be seen in Figure \ref{figure5a}. More importantly, the number of in-context examples (shot number) plays a significant role. In this case, the error approaches zero only when the shot number is greater than or equal to $d$. This indicates that the implicit gradients of ICL are influenced by the shot number, and there exists a threshold $N$, when the actual shot number $b$ is below this threshold, the implicit gradients are noisy. On the other hand, following the same setting of \citep{garg2023transformers} in figure \ref{figure5a}, we compare the performance of the Trained Transformer at different train steps as shown in figure \ref{figure5b}. This indicates that the performance of the model varies with the number of train steps, which also means that the implicit gradients of ICL generated by the model are influenced by the extent of its pre-training.

In conclusion, actual implicit gradient descent involves noise, which primarily stems from the shot number and the degree of model pre-training.
\subsection{The Detailed Results of Algorithm 1}\label{result_algorithm}
\begin{table}[H]
  \caption{The results of Algorithm 1 on SST-2}
  \label{table1}
  \centering
  \begin{tabular}{ccccc}
    \toprule
    Model Name & Layer Number & Module Name &  Optimal $\xi^*$ & Test Acc Improve\\
    \midrule
    GPT-J-6B & 26 & MLP  & 0.95 & 0.7527 - \textbf{0.8437}$(\uparrow)$ \\
    GPT-J-6B & 27 & ATTN  & 0.995 & 0.7527 - 0.7642$(\uparrow)$ \\
    LLAMA2-7B & 30 & MLP    & 0.99 & 0.9228 - 0.9257$(\uparrow)$ \\
    LLAMA2-7B & 30 & ATTN   & 0.95 & 0.9228 - \textbf{0.9287}$(\uparrow)$ \\
    \bottomrule
  \end{tabular}
\end{table}
\begin{table}[H]
\caption{The results of Algorithm 1 on AGNEWS}
  \label{table2}
  \centering
  \begin{tabular}{ccccc}
    \toprule
    Model Name & Layer Number & Module Name &  Optimal $\xi^*$ & Test Acc Improve\\
    \midrule
    GPT-J-6B & 26  & MLP&   0.1 &0.76434 - 0.76947$(\uparrow)$   \\
    GPT-J-6B  & 27 & ATTN  & 0.95 &0.76434 - \textbf{0.77026}$(\uparrow)$    \\
    LLAMA2-7B & 30  & MLP  & 0.995 &0.77026 - \textbf{0.84881}$(\uparrow)$   \\
    LLAMA2-7B  & 30 & ATTN   &0.1 &0.77026 - 0.77039$(\uparrow)$     \\
    \bottomrule
  \end{tabular}
\end{table}
\begin{table}[H]
\caption{The results of Algorithm 1 on EmoC}
  \label{table3}
  \centering
  \begin{tabular}{ccccc}
    \toprule
    Model Name & Layer Number & Module Name &  Optimal $\xi^*$ & Test Acc Improve\\
    \midrule
    GPT-J-6B & 26  & MLP&   0.1 &0.69032 - \textbf{0.74278}$(\uparrow)$   \\
    GPT-J-6B  & 27 & ATTN  & 0.5 &0.69032 - 0.69758$(\uparrow)$    \\
    LLAMA2-7B & 30  & MLP  & 0.1 &0.77110 - \textbf{0.79106}$(\uparrow)$   \\
    LLAMA2-7B  & 30 & ATTN   &0.1 &0.77110 - 0.76910$(\downarrow)$     \\
    \bottomrule
  \end{tabular}
\end{table}
\begin{table}[H]
\caption{The results of Algorithm 1 on MRPC}
  \label{table4}
  \centering
  \begin{tabular}{ccccc}
    \toprule
    Model Name & Layer Number & Module Name &  Optimal $\xi^*$ & Test Acc Improve\\
    \midrule
    GPT-J-6B & 26  & MLP&   0.995 &0.66492 - 0.66492$(-)$   \\
    GPT-J-6B  & 27 & ATTN  & 0.995 &0.66492 - 0.66492$(-)$    \\
    LLAMA2-7B & 30  & MLP  & 0.5 &0.66666 - \textbf{0.67536}$(\uparrow)$   \\
    LLAMA2-7B  & 30 & ATTN   &0.995 &0.66666 - 0.66608$(\downarrow)$     \\
    \bottomrule
  \end{tabular}
\end{table}
\begin{table}[H]
\caption{The results of Algorithm 1 on RTE}
  \label{table5}
  \centering
  \begin{tabular}{ccccc}
    \toprule
    Model Name & Layer Number & Module Name &  Optimal $\xi^*$ & Test Acc Improve\\
    \midrule
    GPT-J-6B & 26  & MLP&   0.99 &0.56884 - \textbf{0.57971}$(\uparrow)$   \\
    GPT-J-6B  & 27 & ATTN  & 0 &0.56884 - 0.56884$(-)$    \\
    LLAMA2-7B & 30  & MLP  & 0.75 &0.56159 - 0.57608$(\uparrow)$   \\
    LLAMA2-7B  & 30 & ATTN   &0.9 &0.56159 - \textbf{0.57971}$(\uparrow)$     \\
    \bottomrule
  \end{tabular}
\end{table}
\begin{table}[H]
\caption{The results of Algorithm 1 on CB}
  \label{table6}
  \centering
  \begin{tabular}{ccccc}
    \toprule
    Model Name & Layer Number & Module Name &  Optimal $\xi^*$ & Test Acc Improve\\
    \midrule
    GPT-J-6B & 26  & MLP&   0.95 &0.58181 - \textbf{0.67272}$(\uparrow)$   \\
    GPT-J-6B  & 27 & ATTN  & 0 &0.58181 - 0.58181$(-)$    \\
    LLAMA2-7B & 30  & MLP  & 0.95 &0.83636 - 0.85454$(\uparrow)$   \\
    LLAMA2-7B  & 30 & ATTN   &0.99 &0.83636 - \textbf{0.87272}$(\uparrow)$     \\
    \bottomrule
  \end{tabular}
\end{table}
\begin{table}[H]
\caption{The results of Algorithm 1 on COPA}
  \label{table7}
  \centering
  \begin{tabular}{ccccc}
    \toprule
    Model Name & Layer Number & Module Name &  Optimal $\xi^*$ & Test F1 Improve\\
    \midrule
    GPT-J-6B & 26  & MLP&   0.95 &0.58 - \textbf{0.59}$(\uparrow)$   \\
    GPT-J-6B  & 27 & ATTN  & 0.99 &0.58 - 0.58$(-)$    \\
    LLAMA2-7B & 30  & MLP  & 0.5 &0.57 - \textbf{0.62}$(\uparrow)$   \\
    LLAMA2-7B  & 30 & ATTN   &0.99 &0.57 - 0.56$(\downarrow)$     \\
    \bottomrule
  \end{tabular}
\end{table}

\end{document}